\newtheorem{claim}{Claim}
\newcommand{\ours}{{{OptiINR}}\xspace}
\newcommand{\siren}{{{\text{SIREN}}}\xspace}
\newcommand{\finer}{{{\text{FINER}}}\xspace}
\newcommand{\finerpp}{{{\text{FINER++}}}\xspace}
\newcommand{\wire}{{{\text{WIRE}}}\xspace}
\newcommand{\gauss}{{{\text{Gauss}}}\xspace}
\newcommand{\fr}{{{\text{FR}}}\xspace}
\newcommand{\isiren}{{{\text{IGA}}}\xspace}
\newcommand{\ben}{\begin{enumerate}}
\newcommand{\een}{\end{enumerate}}
\newcommand{\eg}{{\textit{e.g.,}}\xspace}
\newcommand{\cmt}[1]{}
\begin{document}
\firstpageno{1}
\title{Beyond Heuristics: Globally Optimal Configuration of Implicit Neural Representations}

\author{
\name Sipeng Chen \email sc25bg@fsu.edu \\
\addr Department of Computer Science, Florida State University
\AND
\name Yan Zhang \email yzhang18@fsu.edu \\
\addr Department of Computer Science, Florida State University
\AND
\name Shibo Li\thanks{Corresponding author.} \email shiboli@cs.fsu.edu \\
\addr Department of Computer Science, Florida State University
}

\begingroup
\renewcommand\thefootnote{\fnsymbol{footnote}} 
\maketitle
\endgroup

\begin{abstract}
Implicit Neural Representations (INRs) have emerged as a transformative paradigm in signal processing and computer vision, excelling in tasks from image reconstruction to 3D shape modeling. Yet their effectiveness is fundamentally limited by the absence of principled strategies for optimal configuration—spanning activation selection, initialization scales, layer-wise adaptation, and their intricate interdependencies. These choices dictate performance, stability, and generalization, but current practice relies on ad-hoc heuristics, brute-force grid searches, or task-specific tuning, often leading to inconsistent results across modalities. This work introduces \ours, the first unified framework that formulates INR configuration as a rigorous optimization problem. Leveraging Bayesian optimization, \ours efficiently explores the joint space of discrete activation families—such as sinusoidal (\siren), wavelet-based (\wire), and variable-periodic (\finer)—and their associated continuous initialization parameters. This systematic approach replaces fragmented manual tuning with a coherent, data-driven optimization process. By delivering globally optimal configurations, \ours establishes a principled foundation for INR design, consistently maximizing performance across diverse signal processing applications.
\end{abstract}
\section{Introduction}
Implicit Neural Representations (INRs), also referred to as coordinate-MLPs, have fundamentally reshaped how continuous signals are represented and processed across domains from computer vision to computational physics~\citep{li2021neural,xie2022neural}. In contrast to traditional discrete representations tied to fixed spatial resolutions, INRs parameterize signals as continuous functions via neural networks, yielding resolution-independent representations with exceptional expressiveness and memory efficiency. This paradigm has unlocked capabilities that were previously unattainable, powering applications such as Neural Radiance Fields (NeRF) for photorealistic view synthesis~\cite{mildenhall2020nerf}, signed distance functions for high-fidelity 3D reconstruction~\citep{park2019deepsdf,mescheder2019occupancy}, advanced medical imaging, and even neural solvers for partial differential equations~\citep{sitzmann2020siren,raissi2019physics}. The strength of INRs lies in their ability to exploit the universal approximation property of neural networks~\citep{cybenko1989approximation,hornik1989multilayer} to learn complex, high-dimensional mappings from coordinate space to signal values. Landmark works such as DeepSDF~\citep{park2019deepsdf} demonstrated that MLPs can learn continuous signed distance functions for representing 3D geometry, while NeRF showed that similar architectures can capture view-dependent radiance fields with high fidelity~\citep{mildenhall2020nerf}. Together, these advances established INRs as a powerful alternative to grid-based representations.

Despite substantial progress, the practical effectiveness of implicit neural representations (INRs) remains constrained by a \textit{capacity--convergence gap} rooted in the tight coupling between activation families and their initialization schemes. High-capacity activations—sinusoidal (\siren), wavelet-based/Gabor (\wire), Gaussian, and variable-periodic (\finer)~\citep{sitzmann2020siren,saragadam2023wire,ramasinghe2022beyond,liu2024finer} — provide rich spectral control but can be acutely sensitive to initialization; conversely, simpler, more stable choices converge reliably yet underfit high-frequency content. Initialization strategies (\eg, \siren’s scale-preserving design) are therefore not interchangeable: the optimal settings depend on activation-specific properties, yielding a high-dimensional, non-convex search landscape where activation and initialization cannot be tuned independently. In practice, small hyperparameter changes can shift performance by over 10\,dB PSNR on the same task, yet prevailing workflows still rely on manual, heuristic-driven tuning or coarse grid search. These observations underscore that bridging the capacity--convergence gap requires \textbf{\emph{joint, principled optimization}} of activation selection and initialization to achieve stable training, strong generalization, and robust performance.

To bridge the capacity-convergence gap and move beyond heuristic tuning, we introduce \ours (Optimal INR Configuration via Bayesian Optimization), a unified framework that recasts INR configuration as a formal global-optimization problem over a high-dimensional, mixed-variable space. Because each evaluation entails end-to-end training, we employ Bayesian optimization~\citep{jones1998efficient,snoek2012practical} — designed for expensive black-box objectives — to navigate a comprehensive search space spanning activation families (\eg \siren, \wire, \finer, \gauss, \fr)~\citep{sitzmann2020siren,saragadam2023wire,liu2024finer,jayasundara2025mire,ramasinghe2022beyond} and their \emph{conditional} hyperparameters (\eg base frequency, spread/scale, initialization scaling)~\cite{tancik2021learned,sitzmann2020siren}. Activation selection is modeled as categorical, while associated parameters are continuous and conditional on the chosen family, enabling sample-efficient exploration of the complex, non-linear performance landscape and discovery of high-performing, robust configurations for specific INR tasks. Unlike fragmented trial-and-error, \ours provides an automated, scientifically grounded procedure for configuring state-of-the-art INRs. Our contributions are:

\begin{itemize}
    \item We introduce \ours, a Bayesian optimization framework that jointly optimizes activation families and their initialization parameters, replacing manual heuristic-driven tuning with principled, globally-aware configuration search. We provide theoretical justification in Section \ref{theo:ntk} demonstrating convergence guarantees for our approach.
    \item We formalize INR configuration via a multilayer search space that integrates state-of-the-art activation families and initialization schemes under a single optimization formulation.
    \item Across canonical INR tasks — 1D audio reconstruction, 2D image representation, 3D shape prediction — \ours consistently discovers superior configurations and outperforms hand-tuned baselines under the same evaluation budgets.
    \item \ours yields \emph{robust} configurations that mitigate the hypersensitivity of certain activations to initialization, broadening practical applicability across diverse signal modalities.
\end{itemize}
\section{Background}\label{sect:bk}
\paragraph{Implicit Neural Representations.} An Implicit Neural Representation (INR) parameterizes a continuous signal $g: \mathcal{X} \subset \mathbb{R}^d \to \mathcal{Y} \subset \mathbb{R}^m$ as a neural network $f_\theta$, typically an $L$-layer MLP~\citep{sitzmann2019scene,li2021neural}, encoding the signal within its parameters $\theta$. This paradigm offers fundamental advantages over discrete representations: resolution independence and memory efficiency, as storage scales with network complexity rather than sampling density. The forward pass through the network is defined recursively: $\mathbf{z}^{(0)} = \gamma(\mathbf{x})$, $\mathbf{z}^{(\ell)} = \sigma_{\mathbf{p}}(\mathbf{W}^{(\ell)}\mathbf{z}^{(\ell-1)}+\mathbf{b}^{(\ell)})$ for $\ell=1,\dots,L-1$, and $f_\theta(\mathbf{x}) = \mathbf{W}^{(L)}\mathbf{z}^{(L-1)}+\mathbf{b}^{(L)}$, where $\theta=\{\mathbf{W}^{(\ell)},\mathbf{b}^{(\ell)}\}_{\ell=1}^L$ are the learnable parameters with $\mathbf{W}^{(\ell)}\in\mathbb{R}^{h_\ell\times h_{\ell-1}}$ and $\mathbf{b}^{(\ell)}\in\mathbb{R}^{h_\ell}$, $\sigma_{\mathbf{p}}$ is an element-wise activation function with parameters $\mathbf{p}$, and $\gamma$ is an optional coordinate encoding. Given a dataset $\mathcal{D}=\{(\mathbf{x}_i,\mathbf{y}_i)\}_{i=1}^N$ sampled from the ground truth signal, we optimize $\theta^\star=\arg\min_\theta \frac{1}{N}\sum_{i=1}^N \ell(f_\theta(\mathbf{x}_i),\mathbf{y}_i)+\mathcal{R}(\theta)$, where $\ell$ is a task-specific loss function and $\mathcal{R}$ is an optional regularization term. While the Universal Approximation Theorem~\citep{cybenko1989approximation,hornik1989multilayer} guarantees theoretical expressivity, a fundamental practical challenge is \emph{spectral bias}~\citep{rahaman2019spectral,canatar2021spectral}: neural networks trained with gradient descent inherently learn low-frequency components before high-frequency ones, yielding overly smooth reconstructions that fail to capture fine-grained details and sharp transients in natural signals. Consequently, the performance of $f_{\theta^\star}$ depends critically on architectural choices made prior to training---particularly the activation function family and parameter initialization strategy---which together determine optimization stability, frequency expressivity, and generalization capacity.

\paragraph{Activation Functions and Spectral Bias} The evolution of activation functions in INR literature directly addresses the fundamental challenge of spectral bias. Initial attempts with standard activations like ReLU proved insufficient, necessitating positional encoding\citep{ramasinghe2022beyond} - a preprocessing step mapping input coordinates to higher-dimensional Fourier feature spaces to make high-frequency variations accessible. A conceptual breakthrough came with Sinusoidal Representation Networks (\siren), which integrate periodicity directly into the network architecture by employing $\sigma_{\mathbf{p}}(x) = \sin(\omega_0 x)$ as the primary activation. \siren demonstrated that appropriately chosen activations could obviate positional encoding; however, their success depends critically on principled initialization schemes that preserve activation distributions across layers, highlighting the tight coupling between activation choice and initialization strategy. Subsequent research questioned the necessity of periodicity itself, producing a powerful toolkit of activation functions with distinct spectral properties. Gaussian activations, $\sigma_{\mathbf{p}}(x) = e^{-(s_0 x)^2}$, offer non-periodic alternatives with controllable spatial extent through scale parameter $s_0$. Wavelet Implicit Representations (\wire) employ Gabor wavelets~\citep{saragadam2023wire}, $\sigma_{\mathbf{p}}(x) = e^{j\omega_0 x} e^{-|s_0 x|^2}$, valued for their optimal space-frequency concentration that minimizes the uncertainty principle---particularly suitable for visual signal representation. More recent frameworks like \finer and \finerpp~\citep{liu2024finer} introduce variable-periodic functions, $\sigma_{\mathbf{p}}(x) = \sin(\omega_0(|x|+1)x)$, which modulate local frequency based on input magnitude through adaptive bias initialization, enabling flexible spectral control across different signal regions. While this evolutionary path has produced increasingly sophisticated activation functions, each advancement introduces sensitive hyperparameters (e.g., $\omega_0$, $s_0$, $k$) requiring specific initialization strategies. This proliferation creates a complex configuration landscape where performance depends critically on joint optimization of activation family, parameter values, and initialization scheme---reinforcing the need for principled, automated configuration strategies.

\paragraph{Automated Model Configuration} The challenge of automatically configuring machine learning models is addressed by Automated Machine Learning (AutoML) and Neural Architecture Search (NAS)~\citep{elsken2019neural,feurer2019hyperparameter}. Our work, OptiINR, operates within this paradigm to find optimal hyperparameter configurations for \emph{single, specific tasks} (e.g., representing a given image). This approach is distinct from, yet complementary to, meta-learning for INRs. Meta-learning approaches such as MetaSDF or Meta-SparseINR~\cite{sitzmann2020metasdf} learn weight initializations from signal distributions, enabling rapid fine-tuning for unseen signals by optimizing network \emph{weights} for fast adaptation across tasks. In contrast, OptiINR optimizes network \emph{hyperparameters} (architecture) for maximal performance on individual target signals.

\paragraph{Gaussian Processes} A Gaussian Process (GP) is a non-parametric Bayesian model that defines a probability distribution over functions~\citep{rasmussen2006gaussian}, making it a powerful tool for regression tasks where the underlying function is unknown. A function $f$ drawn from a GP is denoted as $f(\mathbf{x}) \sim \mathcal{GP}(m(\mathbf{x}), k(\mathbf{x}, \mathbf{x}'))$, where $m(\mathbf{x}) = \mathbb{E}[f(\mathbf{x})]$ is the mean function and $k(\mathbf{x}, \mathbf{x}') = \mathbb{E}[(f(\mathbf{x}) - m(\mathbf{x}))(f(\mathbf{x}') - m(\mathbf{x}'))]$ is the covariance (kernel) function. The kernel is a symmetric, positive semi-definite function encoding prior beliefs about function properties such as smoothness and length-scale. For regression with observed data $\mathcal{D} = \{(\mathbf{x}_i, y_i)\}_{i=1}^n$, a GP infers a posterior distribution over functions. A key property is that any finite collection of function values is jointly Gaussian distributed. The posterior predictive distribution for a test point $\mathbf{x}_*$ is also Gaussian: $p(f(\mathbf{x}_*) | \mathcal{D}, \mathbf{x}_*) = \mathcal{N}(\mu(\mathbf{x}_*), \sigma^2(\mathbf{x}_*))$ with predictive mean $\mu(\mathbf{x}_*) = \mathbf{k}_*^T (\mathbf{K} + \sigma_n^2 \mathbf{I})^{-1} \mathbf{y}$ and variance $\sigma^2(\mathbf{x}_*) = k(\mathbf{x}_*, \mathbf{x}_*) - \mathbf{k}_*^T (\mathbf{K} + \sigma_n^2 \mathbf{I})^{-1} \mathbf{k}_*$, where $\mathbf{K}_{ij} = k(\mathbf{x}_i, \mathbf{x}_j)$ is the $n \times n$ kernel matrix, $\mathbf{k}_* = [k(\mathbf{x}_*, \mathbf{x}_1), \ldots, k(\mathbf{x}_*, \mathbf{x}_n)]^T$ is the vector of covariances between test and training points, $\mathbf{y}$ is the vector of observed outputs, and $\sigma_n^2$ is the observation noise variance. The predictive mean $\mu(\mathbf{x}_*)$ provides the best estimate of the function value, while the predictive variance $\sigma^2(\mathbf{x}_*)$ quantifies uncertainty---a property fundamental to the intelligent search strategy of Bayesian optimization.

\section{Method}
The performance of an Implicit Neural Representation is critically sensitive to its architectural configuration, particularly the layer-wise selection of activation functions and corresponding weight initialization schemes. This sensitivity creates a "capacity-convergence gap," where theoretically powerful architectures fail to realize their potential due to the difficulty of finding stable and effective configurations. Current practices rely on manual tuning, parameter reuse, or greedy layer-wise optimization, none of which guarantee global optimality. We propose a novel framework that recasts this complex, ad-hoc process as a formal global optimization problem, solved efficiently using Bayesian optimization to search the high-dimensional, mixed-variable space of network architectures. This principled approach automates the discovery of globally optimal configurations, moving beyond the limitations of existing methods.

\paragraph{Bayesian Optimization for Expensive Black-Box Functions.}
Bayesian optimization is a sample-efficient methodology for global optimization of expensive-to-evaluate, black-box functions~\citep{jones1998efficient,snoek2012practical,shahriari2015taking}. It is particularly well-suited for problems of the form $\boldsymbol{\lambda}^* = \arg\max_{\boldsymbol{\lambda} \in \Lambda} f(\boldsymbol{\lambda})$ where $f(\boldsymbol{\lambda})$ is an objective function with unknown analytic form and costly evaluation. The methodology comprises two primary components: a probabilistic surrogate model and an acquisition function. The surrogate model approximates the objective function probabilistically. We employ a Gaussian Process (GP), a non-parametric Bayesian regression model defining a prior distribution over functions: $f \sim \mathcal{GP}(m(\boldsymbol{\lambda}), k(\boldsymbol{\lambda}, \boldsymbol{\lambda}'))$, where $m(\boldsymbol{\lambda})$ is the mean function and $k(\boldsymbol{\lambda}, \boldsymbol{\lambda}')$ is the covariance kernel. Given observations $\mathcal{D}_n = \{(\boldsymbol{\lambda}_i, y_i)\}_{i=1}^n$ where $y_i = f(\boldsymbol{\lambda}_i)$, the GP posterior provides a predictive distribution for any unevaluated point $\boldsymbol{\lambda}_*$: $p(f(\boldsymbol{\lambda}_*) | \mathcal{D}_n, \boldsymbol{\lambda}_*) = \mathcal{N}(\mu(\boldsymbol{\lambda}_*), \sigma^2(\boldsymbol{\lambda}_*))$. The predictive mean $\mu(\boldsymbol{\lambda}_*)$ estimates the function value, while variance $\sigma^2(\boldsymbol{\lambda}_*)$ quantifies uncertainty. An acquisition function $\alpha(\boldsymbol{\lambda})$ uses these statistics to balance exploration and exploitation, guiding the search for the next evaluation point: $\boldsymbol{\lambda}_{\text{next}} = \arg\max_{\boldsymbol{\lambda} \in \Lambda} \alpha(\boldsymbol{\lambda})$.

\subsection{INR Configuration as a Global Optimization Problem}

The central novelty of our work is to formalize the entire INR design process as a single, unified optimization problem. The performance of an INR is critically determined by the interplay between activation functions and weight initialization strategies on a layer-by-layer basis. Previous automated methods such as MIRE approach this by constructing networks greedily, selecting the best activation for each layer sequentially. This layer-wise greedy approach cannot guarantee global optimality, as the optimal choice for one layer is deeply conditioned on choices made for all other layers (see Theorem \ref{theo:global} for details).

We instead define a global configuration vector $\boldsymbol{\Lambda}$ that simultaneously parameterizes choices for all $L$ layers of the network. For each layer $l \in \{1, \ldots, L\}$, we define a configuration tuple $\boldsymbol{\lambda}_l = (\sigma_l, \mathcal{I}_l, \mathbf{p}_l)$, where $\sigma_l \in \{\text{SIREN}, \text{WIRE}, \text{GAUSS}, \text{FINER++}, \text{FR}\}$ is a categorical variable for the activation function and $\mathcal{I}_l \in \{0, 1\}$ is a binary variable indicating the use of a SIREN-style initialization. The vector of continuous hyperparameters $\mathbf{p}_l \in \mathbb{R}^{d_p}$ amalgamates several crucial per-layer parameters: activation-specific values (e.g., frequency $\omega_0$ or scale $s_0$, conditional on the choice of $\sigma_l$), the initial range for the layer's weights, and a per-layer learning rate. The complete network configuration is the concatenation of these layer-wise tuples: $\boldsymbol{\Lambda}_{\text{network}} = (\boldsymbol{\lambda}_1, \boldsymbol{\lambda}_2, \ldots, \boldsymbol{\lambda}_L) \in \mathcal{L}$, where $\mathcal{L}$ denotes the high-dimensional, mixed-type configuration space. Our objective is to find the optimal configuration $\boldsymbol{\Lambda}^* = \arg\max_{\boldsymbol{\Lambda} \in \mathcal{L}} f(\boldsymbol{\Lambda})$, where $f(\boldsymbol{\Lambda})$ is the performance of the INR (e.g., validation PSNR) after being fully trained with the specified configuration. This evaluation constitutes the expensive black-box function we aim to optimize.

\subsection{Surrogate Modeling of INR Configuration}

\paragraph{A Product Kernel for Mixed-Variable Spaces.}
Our configuration vector $\boldsymbol{\Lambda}$ lives in a product space $\mathcal{X} = \mathcal{X}_{\text{cont}} \times \mathcal{X}_{\text{cat}}$, comprising continuous and categorical variables~\citep{sheikh2022mixmobo,lukovic2020diversity}. To model the correlation structure over this space, we design a product kernel that separates contributions from each variable type:
$k(\boldsymbol{\Lambda}, \boldsymbol{\Lambda}') = k_{\text{cont}}(\boldsymbol{\Lambda}_c, \boldsymbol{\Lambda}'_c) \times k_{\text{cat}}(\boldsymbol{\Lambda}_{\text{cat}}, \boldsymbol{\Lambda}'_{\text{cat}})$. For the continuous components $\boldsymbol{\Lambda}_c$, we use the Matérn kernel~\citep{rasmussen2006gaussian,daxberger2020mixed}, which generalizes the popular Squared Exponential (RBF) kernel and provides control over the smoothness of the surrogate function via parameter $\nu$: $k_{\text{cont}}(\boldsymbol{\Lambda}_c, \boldsymbol{\Lambda}'_c) = \frac{2^{1-\nu}}{\Gamma(\nu)} (\sqrt{2\nu} \frac{||\boldsymbol{\Lambda}_c - \boldsymbol{\Lambda}'_c||_2}{\ell})^\nu K_\nu(\sqrt{2\nu} \frac{||\boldsymbol{\Lambda}_c - \boldsymbol{\Lambda}'_c||_2}{\ell})$, where $\ell$ is the length-scale and $K_\nu$ is the modified Bessel function. This flexibility is crucial for complex performance landscapes where the RBF kernel's assumption of infinite smoothness is often incorrect. For the categorical components $\boldsymbol{\Lambda}_{\text{cat}}$, we first transform them into a continuous space using one-hot encoding, where a categorical variable with $M$ levels is mapped to an $M$-dimensional binary vector. We then define $k_{\text{cat}}$ as a Squared Exponential kernel with Automatic Relevance Determination (ARD): $k_{\text{cat}}(\boldsymbol{\Lambda}_{\text{cat}}, \boldsymbol{\Lambda}'_{\text{cat}}) = \exp(-\sum_{j=1}^{M} \frac{(\boldsymbol{\Lambda}_{\text{cat},j} - \boldsymbol{\Lambda}'_{\text{cat},j})^2}{2\ell_j^2})$, where each dimension has a unique length-scale $\ell_j$. The designed mechanism establishes the validity of our kernel ensures that as the number of evaluations grows, the posterior variance of the GP will concentrate around the true function $f(\boldsymbol{\Lambda})$ (see Theorem \ref{theo:opt} for details).

\subsubsection{Empirical Expected Improvement via Matheron's Rule}

The search for the next point to evaluate is guided by an acquisition function $\alpha: \mathcal{X} \to \mathbb{R}$ that balances exploration of uncertain regions with exploitation of promising areas. We adopt a Monte Carlo-based Empirical Expected Improvement (EEI) to overcome limitations of the analytic Expected Improvement (EI) function. While analytic EI admits a closed form for Gaussian posteriors in sequential settings, it becomes intractable for batch queries and exhibits sensitivity to model misspecification.

\paragraph{Expected Improvement}
Let $f: \mathcal{X} \to \mathbb{R}$ denote our objective function with GP prior $f \sim \mathcal{GP}(m_0, k_0)$. Given observations $\mathcal{D}_n = \{(\boldsymbol{\lambda}_i, y_i)\}_{i=1}^n$ where $y_i = f(\boldsymbol{\lambda}_i) + \epsilon_i$ with $\epsilon_i \sim \mathcal{N}(0, \sigma_n^2)$, and current best observation $f_{\text{best}} = \max_{i \in [n]} y_i$, the improvement function is defined as:
$I(\boldsymbol{\lambda}) = \max\{0, f(\boldsymbol{\lambda}) - f_{\text{best}}\} = [f(\boldsymbol{\lambda}) - f_{\text{best}}]_+$. The Expected Improvement~\citep{jones1998efficient} is the expectation of this improvement under the posterior measure:

\[
\text{EI}(\boldsymbol{\lambda}) = \mathbb{E}_{f(\boldsymbol{\lambda}) \sim p(\cdot|\mathcal{D}_n)}[I(\boldsymbol{\lambda})] = \int_{\mathbb{R}} [t - f_{\text{best}}]_+ p(f(\boldsymbol{\lambda}) = t|\mathcal{D}_n) \, dt
\]

Under the GP posterior $f(\boldsymbol{\lambda})|\mathcal{D}_n \sim \mathcal{N}(\mu_n(\boldsymbol{\lambda}), \sigma_n^2(\boldsymbol{\lambda}))$, this admits the analytic form: $\text{EI}(\boldsymbol{\lambda}) = \sigma_n(\boldsymbol{\lambda})[\phi(Z)\Phi(Z) + Z]$ where $Z = \frac{\mu_n(\boldsymbol{\lambda}) - f_{\text{best}}}{\sigma_n(\boldsymbol{\lambda})}$, and $\phi, \Phi$ denote the standard normal PDF and CDF respectively.

\paragraph{Monte Carlo Approximation.}
For batch optimization and robustness to model misspecification, we employ a Monte Carlo estimator. Let $\{f^{(s)}\}_{s=1}^S$ be i.i.d. samples from the posterior process. The Empirical Expected Improvement~\citep{wilson2018maximizing} is:
$$\widehat{\text{EI}}(\boldsymbol{\lambda}) = \frac{1}{S} \sum_{s=1}^S [f^{(s)}(\boldsymbol{\lambda}) - f_{\text{best}}]_+$$

By the Strong Law of Large Numbers, $\widehat{\text{EI}}(\boldsymbol{\lambda}) \xrightarrow{a.s.} \text{EI}(\boldsymbol{\lambda})$ as $S \to \infty$. The convergence rate follows $\mathbb{E}[|\widehat{\text{EI}}(\boldsymbol{\lambda}) - \text{EI}(\boldsymbol{\lambda})|^2] = \mathcal{O}(S^{-1})$ by the Central Limit Theorem.

\paragraph{Efficient Posterior Sampling via Matheron's Rule.}
Direct posterior sampling requires computing the Cholesky decomposition of $\mathbf{K}_n + \sigma_n^2\mathbf{I} \in \mathbb{R}^{n \times n}$, incurring $\mathcal{O}(n^3)$ cost per sample. For $S$ samples, this yields prohibitive $\mathcal{O}(Sn^3)$ complexity. Alternative approaches such as random Fourier features or sparse GPs sacrifice posterior accuracy, which is critical for our high-dimensional optimization problem. We leverage Matheron's rule~\citep{rasmussen2006gaussian,daulton2022probabilistic} (also known as the conditional simulation formula) for exact posterior sampling with dramatically reduced computational cost. This approach offers critical advantages over alternative methods. First, it provides exceptional computational efficiency by computing the expensive matrix inversion $[\mathbf{K} + \sigma_n^2\mathbf{I}]^{-1}$ only once, reducing complexity from $\mathcal{O}(Sn^3)$ to $\mathcal{O}(n^3 + Sn^2)$ for $S$ samples (see Theorem \ref{theo:math} for details). Second, unlike approximation methods such as inducing points or random features, Matheron's rule produces exact samples from the true posterior distribution, preserving the GP's uncertainty quantification that is crucial for balancing exploration and exploitation in our optimization problem. Third, once the weight vector $\mathbf{w}$ is computed, posterior evaluations at different points can be parallelized across samples and query locations, enabling efficient GPU utilization and further accelerating the optimization process. Algorithm~\ref{alg:optinr} in Section~\ref{sec:alg} outlines the complete workflow for discovering optimal INR configurations with our Bayesian optimization framework.

\textbf{Theorem (Matheron's Rule):} Let $f \sim \mathcal{GP}(m_0, k_0)$ be a GP prior and $\mathcal{D}_n = \{(\mathbf{X}, \mathbf{y})\}$ be observations. A sample from the posterior process can be expressed as:
$f_{\text{post}}(\cdot) \stackrel{d}{=} f_{\text{prior}}(\cdot) + \mathbf{k}(\cdot, \mathbf{X})[\mathbf{K} + \sigma_n^2 \mathbf{I}]^{-1}(\mathbf{y} - f_{\text{prior}}(\mathbf{X}))$,
where $f_{\text{prior}} \sim \mathcal{GP}(m_0, k_0)$, $\mathbf{K}_{ij} = k_0(\mathbf{x}_i, \mathbf{x}_j)$, and $\stackrel{d}{=}$ denotes equality in distribution. This decomposition enables the following efficient sampling procedure:
first, draw one sample path $f_{\text{prior}} \sim \mathcal{GP}(m_0, k_0)$ using random Fourier features or inducing points; second, compute the weight vector $\mathbf{w} = [\mathbf{K} + \sigma_n^2\mathbf{I}]^{-1}(\mathbf{y} - f_{\text{prior}}(\mathbf{X}))$ once;
third, for any query point $\boldsymbol{\lambda}$, compute $f_{\text{post}}(\boldsymbol{\lambda}) = f_{\text{prior}}(\boldsymbol{\lambda}) + \mathbf{k}(\boldsymbol{\lambda}, \mathbf{X})\mathbf{w}$ The computational complexity is $\mathcal{O}(n^3)$ for the initial matrix inversion plus $\mathcal{O}(n)$ per query point evaluation, amortizing the cost across $S$ samples. This methodology provides a principled, globally-aware strategy for exploring the mixed-variable configuration space $\mathcal{X}$, capturing complex interdependencies between layers, activation functions, and initialization schemes to discover high-performing architectures in a fully automated fashion.

\section{Related Work}
Our work builds upon three core areas of research: the development of implicit neural representations, the design of specialized activation functions to overcome spectral bias, and the application of automated machine learning to architectural design. \textbf{Implicit Neural Representations.} The paradigm of representing signals as continuous functions parameterized by coordinate-based MLPs has fundamentally reshaped fields like 3D vision and computer graphics \cite{li2021neural, xie2022neural}. Foundational works such as DeepSDF \cite{park2019deepsdf} and Occupancy Networks \cite{mescheder2019occupancy} demonstrated the efficacy of INRs for high-fidelity 3D shape modeling. This was famously extended to novel view synthesis with Neural Radiance Fields (NeRF) \cite{mildenhall2020nerf}, cementing INRs as a powerful, resolution-agnostic alternative to traditional discrete representations. \textbf{Activation Functions and Spectral Bias.} A primary challenge in training INRs is the inherent spectral bias of standard MLPs, which struggle to learn high-frequency functions \cite{rahaman2019spectral}. Early solutions relied on positional encoding with Fourier features to inject high-frequency information at the input layer \cite{tancik2020fourier}. A significant breakthrough came with Sinusoidal Representation Networks (SIRENs) \cite{sitzmann2020siren}, which showed that using periodic activation functions throughout the network could natively represent fine details. The success of SIREN spurred an explosion of research into alternative activation functions, each with a unique inductive bias, including wavelet-based (WIRE) \cite{saragadam2023wire}, Gaussian \cite{ramasinghe2022beyond}, and variable-periodic (FINER) \cite{liu2024finer} activations. While this has created a rich toolkit, it has also transformed INR design into a complex configuration problem where performance is highly sensitive to the choice of activation and its initialization. \textbf{Automated Configuration for INRs.} Our work addresses this challenge by drawing from the principles of Automated Machine Learning (AutoML) and Neural Architecture Search (NAS) \cite{elsken2019neural, feurer2019hyperparameter}. We employ Bayesian optimization, a sample-efficient global optimization strategy well-suited for expensive black-box functions like training a neural network \cite{snoek2012practical}. While most INR research relies on manual tuning, the most relevant automated approach is MIRE \cite{jayasundara2025mire}, which uses a greedy, layer-wise dictionary learning method to select activations. However, its sequential nature cannot guarantee global optimality. Our framework, \texttt{OptiINR}, distinguishes itself by performing a global, joint optimization over all layers simultaneously. This approach is also distinct from meta-learning frameworks like MetaSDF \cite{sitzmann2020metasdf, tancik2021learned}, which learn priors for fast adaptation to new signals, whereas our goal is to find the single best-performing architecture for a specific, individual signal.

\section{Experiment}
To rigorously validate the \ours framework, we designed a comprehensive suite of experiments aimed at answering three central research questions. First, can a principled, global optimization framework discover configurations that consistently and significantly outperform state-of-the-art, manually-tuned baselines across diverse signal modalities? Second, does the framework's efficacy scale from low-dimensional signals to more complex, high-dimensional representations? Third, and most critically, can the architectures discovered through automated search reveal novel, generalizable design principles that challenge or refine conventional heuristics in INR design? Through meticulous quantitative and qualitative analysis across multiple canonical tasks~\citep{sitzmann2020siren,saragadam2023wire,liu2024finer}, we demonstrate that \ours not only automates and elevates the configuration process but also serves as a powerful tool for advancing the fundamental understanding of what constitutes an optimal implicit neural representation. 

\paragraph{Experimental Protocol} All experiments were conducted using PyTorch~\citep{paszke2019pytorch}, with the Bayesian optimization component implemented via the BoTorch library~\citep{balandat2020botorch}. To isolate the impact of network configuration, we employed a consistent base architecture across all evaluated models: a four-layer MLP with 256 hidden units per layer. Each configuration was trained for 10,000 epochs using the AdamW optimizer~\citep{loshchilov2017decoupled} with learning rate $1 \times 10^{-4}$, without learning rate scheduling. All evaluations were performed on NVIDIA B200 GPUs. This standardized setup ensures that performance differences are attributable solely to the configuration—the layer-wise combination of activations and initializations—which is the primary variable under investigation.

\paragraph{\ours Configuration Space:} The core of our method is the structured, mixed-variable search space, which OptiINR navigates to find optimal configurations. This space encompasses critical design choices for a 4-layer network architecture. A binary variable determines whether to use standard Fourier feature positional encoding~\citep{tancik2020fourier}, with a corresponding continuous parameter if PE is used, controlling the scale of input coordinate mapping. For each of the four hidden layers, a categorical variable selects the activation function from task-specific sets: \{\siren, \finer\} for audio representation, \{\siren, \finer, \finerpp, \wire\} for image representation, and \{\finer, \gauss, \finerpp, \wire\} for 3D shape representation, enabling discovery of heterogeneous architectures tailored to each signal modality. Each selected activation function has an associated continuous hyperparameter (e.g., $\omega_0$) that is jointly optimized, allowing fine-tuning of the activation's spectral properties on a per-layer basis. Additionally, to account for varying optimization dynamics across network depth, each of the four layers has its own independent learning rate $\alpha_l$ optimized as a continuous variable.

\textbf{Baseline Methods:} To ensure rigorous and fair comparison, \ours was evaluated against a comprehensive set of state-of-the-art INRs using their officially published or standard configurations, measuring \ours against methods operating under their ideal, author-optimized conditions. The selected baselines represent diverse inductive biases: \siren~\citep{sitzmann2020siren}, the foundational model employing periodic sinusoidal activations; \finer~\citep{liu2024finer}, a recent advance using variable-periodic activations for flexible spectral control; \gauss~\citep{ramasinghe2022beyond}, a representative non-periodic activation based on locality; Wavelet (\wire)~\citep{saragadam2023wire}, a robust model based on complex Gabor wavelets known for excellent space-frequency localization; \fr~\citep{zheng2024fourier}, a recent method based on Fourier reparameterized training; and \isiren~\citep{zheng2024fourier}, an improved \siren variant incorporating inductive gradient adjustment. Our \ours optimization process began with 30 initial configurations generated via space-filling Latin Hypercube sampling to ensure broad initial coverage, followed by 100 iterations of Bayesian optimization to refine the search and discover optimal configurations through automated exploration-exploitation balancing.

\begin{figure}[!htbp]
\centering
\begin{subfigure}[t]{0.45\linewidth}
    \centering
    \includegraphics[width=\linewidth]{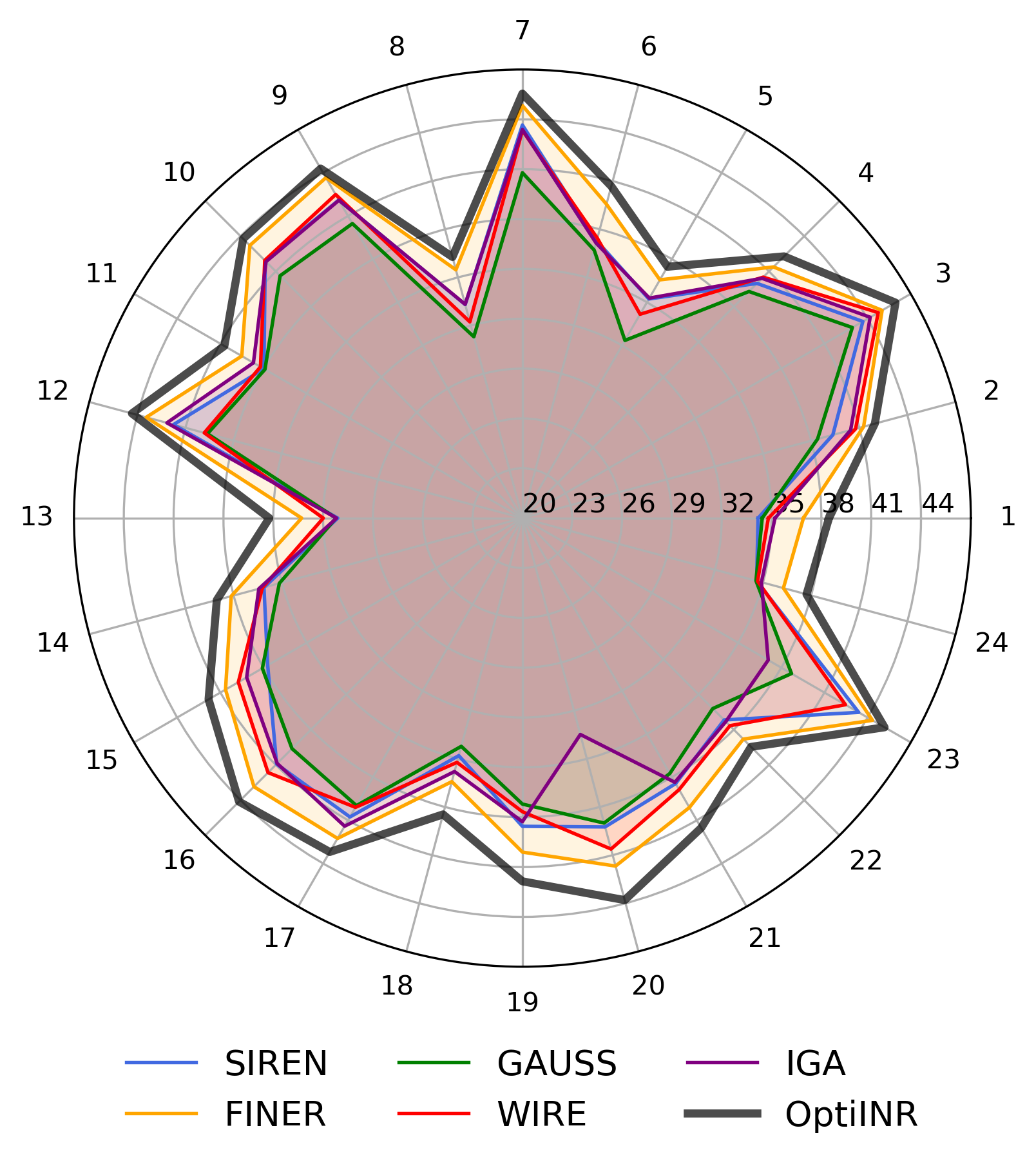}
    \caption{\small Reconstruction PSNR on Kodak}
    \label{fig:subfig1}
\end{subfigure}
\begin{subfigure}[t]{0.45\linewidth}
    \centering    \includegraphics[width=\linewidth]{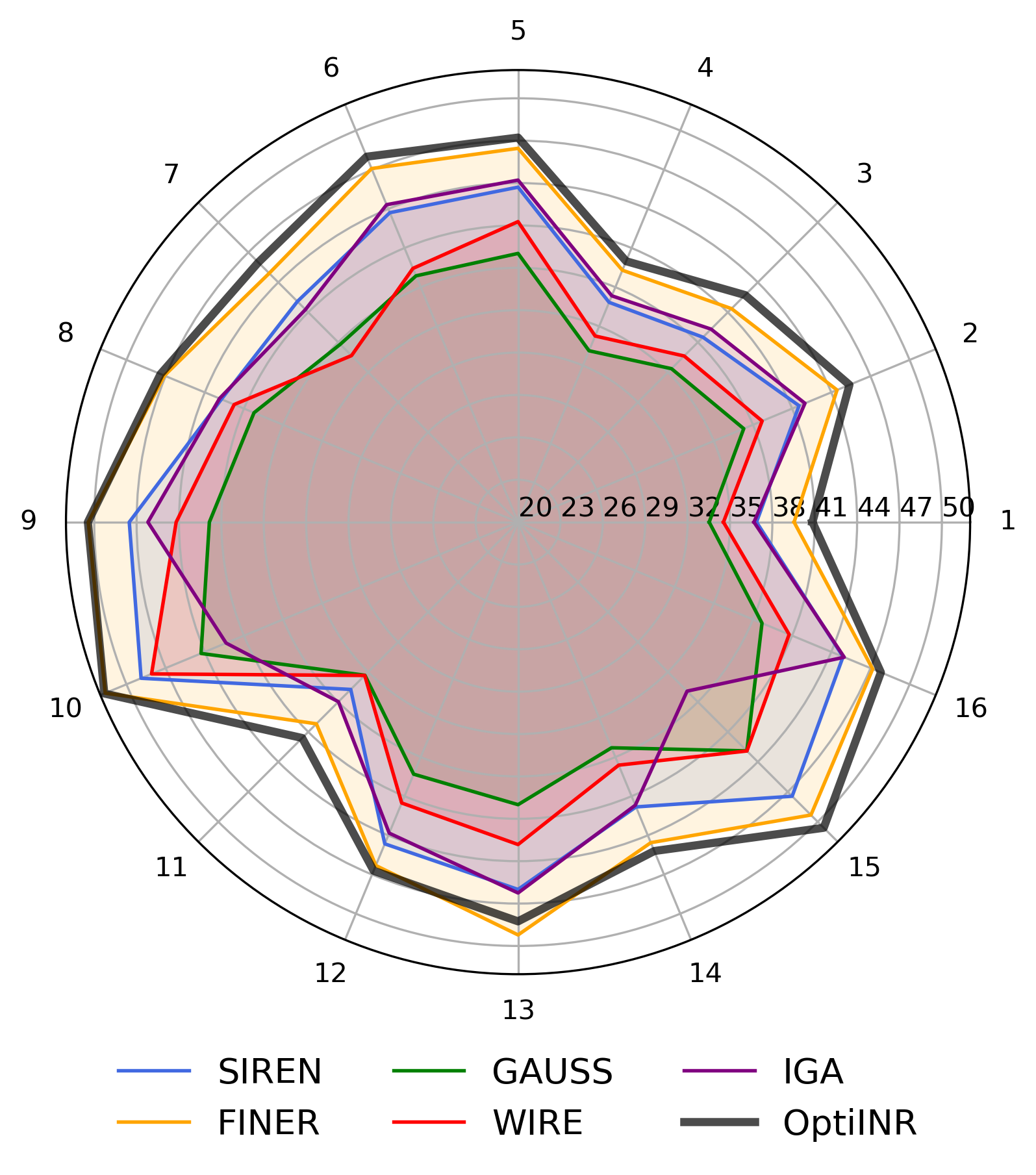}
    \caption{\small Reconstruction PSNR on Div2K}
    \label{fig:subfig2}
\end{subfigure}
\caption{Detailed per-image PSNR comparisons across all methods on Kodak and Div2K}
\label{fig:psnr_detailed}
\end{figure}

\subsection{Image Representation}
Image representation serves as the canonical benchmark for INR capabilities, requiring networks to learn continuous mappings $f: \mathbb{R}^2 \to \mathbb{R}^3$ from pixel coordinates to RGB values. This task challenges INRs to capture both smooth gradients and high-frequency details present in natural images, making it an ideal testbed for configuration optimization. Networks are provided with normalized coordinates without positional embedding and trained to predict corresponding RGB values over 10,000 epochs.

\begin{table}[ht]

\centering
\small
\setlength{\tabcolsep}{5pt}
\caption{\small Average PSNR (dB) $\pm$ std on image representation tasks. OptiINR consistently outperforms all baselines.}
\label{tab:image_psnr}
\begin{tabular}{lcc}
\toprule
Method & Kodak & DIV2K \\
\midrule
SIREN  & $38.47 \pm 3.47$ & $42.75 \pm 3.91$ \\
Gauss & $37.36 \pm 3.11$ & $38.48 \pm 3.13$ \\
WIRE & $38.69 \pm 3.50$ & $39.85 \pm 3.81$ \\
FR & $35.90 \pm 2.42$ & $38.87 \pm 2.27$ \\
FINER & $40.24 \pm 3.23$ & $45.56 \pm 3.84$ \\
GF & $38.47 \pm 4.50$ & $40.57 \pm 5.54$ \\
IGA & $38.27 \pm 3.43$ & $41.77 \pm 3.24$ \\
\textbf{OptiINR (ours)} & $\mathbf{41.38 \pm 3.05}$ & $\mathbf{46.24 \pm 3.49}$ \\
\bottomrule
\end{tabular}

\end{table}

\textbf{Datasets and Evaluation Protocol.} We evaluate on two complementary benchmarks: the Kodak dataset~\citep{franzen1999kodak} containing 24 diverse natural images at $768 \times 512$ resolution encompassing portraits, landscapes, architecture, and detailed textures; and the DIV2K dataset~\citep{agustsson2017ntire}, where we use 16 cropped $512 \times 512$ patches selected for varied texture complexities and frequency characteristics, providing a challenging high-resolution testbed.

\textbf{Quantitative Results.} Table~\ref{tab:image_psnr} summarizes the average PSNR and standard deviation across both datasets, demonstrating OptiINR's substantial performance gains. On Kodak, OptiINR achieves 41.38 dB average PSNR, surpassing the strongest baseline FINER by 1.14 dB and showing remarkable improvements over SIREN (2.91 dB), Gaussian activations (4.02 dB), and Fourier Reparameterization (5.48 dB). Figure~\ref{fig:psnr_detailed} presents the detailed per-image PSNR comparisons across all methods, revealing that improvements are consistent across all 24 Kodak images without exception, with per-image gains ranging from 0.91 to 4.14 dB over the best baseline for each image.

On DIV2K's high-resolution patches, OptiINR demonstrates even more pronounced advantages, achieving PSNR values from 39.99 to an exceptional 51.70 dB as shown in Table~\ref{tab:image_psnr}. The average 46.24 dB represents approximately 3–4 dB improvement over the best baselines, with particularly dramatic gains on images containing repetitive patterns or fine details where traditional INR activations fail to capture the full frequency spectrum.

\begin{figure}[!htbp]
\centering
\includegraphics[width=0.95\linewidth]{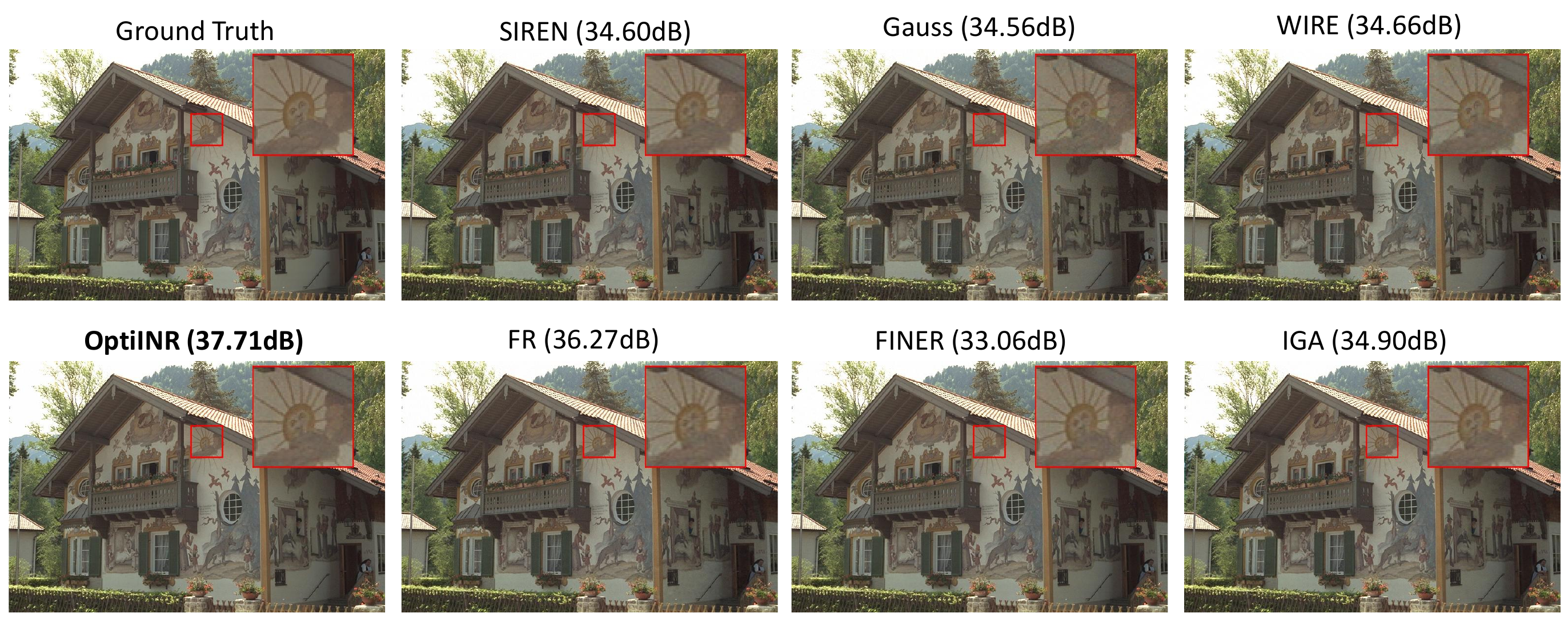}
\caption{Kodak~24 with the region of interest (red box) and an upper-right enlargement rendered with nearest-neighbor to preserve pixel details. All methods use the same ROI for fair visual comparison.}
\label{fig:kodak24}
\end{figure}

\textbf{Configuration Adaptation Analysis.} OptiINR's discovered configurations reveal sophisticated adaptation to image characteristics. This is visualized qualitatively for two representative images from the Kodak dataset in Figure~\ref{fig:kodak24}, which shows the final reconstructions. To further highlight the performance differences, Figure~\ref{fig:kodak17} and Figure~\ref{fig:kodak21} display the corresponding error fields for all evaluated methods. For smooth, low-frequency content, OptiINR selects Gaussian or FINER++ activations in early layers for smooth interpolation, followed by periodic activations (SIREN, sinusoidal) in deeper layers to capture residual high-frequency components. For texture-rich images with prominent edges, OptiINR favors wavelet-based activations (WIRE, Gabor) throughout the network, leveraging their optimal space-frequency localization. This automatic adaptation eliminates manual parameter tuning where single misconfigurations can degrade performance by several dB. Notably, OptiINR discovers novel activation combinations unexplored in prior work, such as using band-limited functions in intermediate layers to bridge spatially-localized early features and globally periodic final layers. This leads to the superior reconstructions shown in Figure~\ref{fig:kodak24}, where the reduction in reconstruction error is made evident by the significantly attenuated error fields in Figures~\ref{fig:kodak17} and~\ref{fig:kodak21}.

\subsection{Audio Reconstruction}

Audio reconstruction presents unique challenges for INRs, requiring precise capture of temporal dynamics as demonstrated in HyperSound~\citep{szatkowski2023hypersound}, harmonic relationships, and frequency content spanning multiple octaves. The task is formulated as learning a mapping $f: \mathbb{R} \to \mathbb{R}$ from time coordinates to signal amplitude, where the network must represent complex waveforms with extremely high-frequency details and intricate harmonic structures.

\begin{table}[t]

\centering
\small
\setlength{\tabcolsep}{4pt}
\caption{PSNR (dB) comparison on audio reconstruction. OptiINR achieves breakthrough performance.}
\label{tab:audio}
\begin{tabular}{lccc}
\toprule
Method & Bach & Count & Two Spk \\
\midrule
SIREN & 52.59 & 34.39 & 41.59 \\
Gauss & 16.49 & 21.32 & 17.21 \\
WIRE & 17.54 & 21.54 & 24.16 \\
FR & 54.94 & 36.93 & 56.36 \\
FINER & 36.67 & 39.35 & 42.27 \\
IGA & 52.35 & 34.41 & 42.39 \\
\textbf{OptiINR} & \textbf{60.84} & \textbf{49.60} & \textbf{68.39} \\
\bottomrule
\end{tabular}

\end{table}

\textbf{Datasets and Evaluation Protocol.} We evaluate on three standard audio signals from the \siren~\citep{sitzmann2020siren} benchmark: Bach (complex polyphonic composition with intricate harmonic structures), Counting (speech with distinct phonetic transitions), and Two Speakers (overlapping voices requiring separation of distinct characteristics). Following established protocols, the output layer was initialized with $\mathcal{U}(-10^{-4}, 10^{-4})$ distribution and zero biases for stable training, input coordinates were mapped to $[-100, 100]$, and models were trained for 10,000 iterations.


\textbf{Quantitative Results.} Table~\ref{tab:audio} demonstrates OptiINR's exceptional performance gains across all audio signals. On the Bach composition, OptiINR achieves 60.84 dB PSNR, surpassing the best baseline (FR) by 5.90 dB and SIREN by 8.25 dB. The Counting sequence sees OptiINR reaching 49.60 dB versus FINER's 39.35 dB—a remarkable 10.25 dB improvement. Most dramatically, on the Two Speakers signal, OptiINR achieves 68.39 dB compared to FR's 56.36 dB, representing a 12.03 dB gain. These substantial numerical improvements translate to orders-of-magnitude differences in reconstruction error, with OptiINR achieving a near-machine-precision loss ($\approx 10^{-6}$) while baselines struggle with losses 3–4 orders of magnitude higher. This exceptional accuracy is visualized in Figure~\ref{fig:twospeakers} and~\ref{fig:bach}, which present a detailed comparison of the reconstructed waveforms and their corresponding spectral analyses. The predicted audio signal from OptiINR is visually indistinguishable from the ground truth waveform, perfectly capturing the amplitude and temporal dynamics. In contrast, baseline methods exhibit significant distortions, failing to replicate the signal's structure with high fidelity. The spectrum analysis further confirms this superiority; the signed spectral residual plot for OptiINR is almost entirely neutral, indicating a near-perfect match to the ground truth spectrum across all frequencies. Baselines, however, show large regions of spectral error, demonstrating their inability to accurately reconstruct the full frequency content. This exceptional accuracy allows OptiINR to preserve subtle audio characteristics, including room acoustics, instrumental timbres, and voice inflections that are completely lost in baseline reconstructions.




\subsection{3D Shape Representation: Occupancy Reconstruction}

Three-dimensional shape representation through occupancy fields tests INRs' ability to model complex geometric structures and maintain topological consistency across multiple spatial scales. This task involves learning a function $f: \mathbb{R}^3 \to \{0,1\}$ following the occupancy network formulation~\citep{mescheder2019occupancy} that maps voxel coordinates to binary occupancy values, where 1 indicates object presence and 0 denotes empty space, effectively acting as a 3D point classifier.

\begin{table}[t]

\centering
\small
\setlength{\tabcolsep}{4pt}
\caption{IoU comparison on 3D occupancy reconstruction at $512^3$ resolution.}
\label{tab:3d_iou}
\begin{tabular}{lcc}
\toprule
Method & Dragon & Thai Statue \\
\midrule
SIREN & 0.9881 & 0.9778 \\
Gauss & 0.9934 & 0.9871 \\
WIRE & 0.9924 & 0.9861 \\
FR & 0.9919 & 0.9650 \\
FINER & 0.9897 & 0.9804 \\
IGA & 0.9919 & 0.9834 \\
\textbf{OptiINR} & \textbf{0.9936} & \textbf{0.9884} \\
\bottomrule
\end{tabular}

\end{table}

\textbf{Dataset and Experimental Setup.} We evaluate on high-resolution models from the Stanford 3D Scanning Repository~\citep{levoy2000stanford}: the Dragon and Thai Statue, chosen for their intricate geometric details and varied surface characteristics. Both models were voxelized at $512^3$ resolution, providing a challenging testbed for precise boundary representation. Performance is measured using Intersection over Union (IoU), which captures occupancy quality while ignoring the large number of trivial true negatives.


\textbf{Quantitative results.} Table~\ref{tab:3d_iou} shows OptiINR’s consistent gains in geometric accuracy. On \emph{Dragon}, OptiINR attains 0.9936 IoU vs.\ 0.9934 for the best baseline (Gaussian activations); while a 0.0002 absolute gain appears small, on a $512^3$ grid it corresponds to $\approx 2.7\times 10^4$ additional correct voxel decisions, concentrated in high-curvature regions (scales, wing membranes, facial details). On \emph{Thai Statue}, OptiINR reaches 0.9884 IoU vs.\ 0.9871, with improvements primarily on carved motifs and thin protrusions requiring precise localization. Reconstruction visualizations are provided in Fig.~\ref{fig:dragon} and Fig.~\ref{fig:Thai}.


\section{Conclusion}
Configuring implicit neural representations (INRs) is increasingly challenging, so we recast it as a global optimization problem rather than relying on manual tuning and ad-hoc heuristics. OptiINR uses Bayesian optimization to jointly select activation functions and initialization schemes, yielding a unified, sample-efficient, architecture-agnostic procedure. Across core applications—2D image representation, 3D shape modeling, and novel-view synthesis—configurations discovered by OptiINR consistently outperform state-of-the-art manual baselines and prior automated methods. Analysis shows the optimal design is strongly task-dependent, revealing the limits of one-size-fits-all rules and motivating principled automated search. By providing an extensible foundation for INR design, OptiINR improves performance and reliability, scales with evaluation budgets, and helps close the capacity–convergence gap that has constrained practical effectiveness.

\newpage

\bibliography{main}


\newpage

\appendix

\section{\ours Algorithm}
\label{sec:alg}
The full workflow for discovering optimal INR configurations is outlined in Algorithm \ref{alg:optinr}. The process iteratively refines its model of the performance landscape and makes increasingly informed decisions. The procedure for maximizing the acquisition function is detailed in Algorithm \ref{alg:eei}.

\begin{algorithm}[H]
\caption{OptiINR: Bayesian Optimization for INR Configuration}
\label{alg:optinr}
\begin{algorithmic}[1]
\State \textbf{Input:} Objective function $f(\cdot)$, search space $\mathcal{L}$, initial samples $N_{\text{init}}$, total iterations $T$.
\State \textbf{Initialize:} GP with mixed-variable product kernel $k(\cdot, \cdot')$.
\State \textbf{Initialization Phase:}
\State \quad Sample initial configurations $\{\boldsymbol{\Lambda}_i\}_{i=1}^{N_{\text{init}}}$ from $\mathcal{L}$ using a space-filling design.
\State \quad Evaluate the objective function for each initial configuration: $\mathcal{D}_{\text{init}} = \{(\boldsymbol{\Lambda}_i, f(\boldsymbol{\Lambda}_i))\}_{i=1}^{N_{\text{init}}}$.
\State \textbf{Optimization Loop:}
\For{$t = N_{\text{init}}$ to $T-1$}
    \State Fit GP surrogate model to the current dataset $\mathcal{D}_t$.
    \State Find next configuration by maximizing Empirical Expected Improvement (see Algorithm \ref{alg:eei}):
    \Statex \qquad $\boldsymbol{\Lambda}_{t+1} = \arg\max_{\boldsymbol{\Lambda} \in \mathcal{L}} \widehat{\text{EI}}(\boldsymbol{\Lambda} | \mathcal{D}_t)$.
    \State Evaluate objective: $y_{t+1} = f(\boldsymbol{\Lambda}_{t+1})$.
    \State Update dataset: $\mathcal{D}_{t+1} = \mathcal{D}_t \cup \{(\boldsymbol{\Lambda}_{t+1}, y_{t+1})\}$.
\EndFor
\State \textbf{Return:} $\boldsymbol{\Lambda}^* = \arg\max_{(\boldsymbol{\Lambda}, y) \in \mathcal{D}_T} y$.
\end{algorithmic}
\end{algorithm}

\begin{algorithm}[H]
\caption{Empirical Expected Improvement (EEI) Computation}
\label{alg:eei}
\begin{algorithmic}[1]
\State \textbf{Input:} Candidate configuration $\boldsymbol{\Lambda}$, GP posterior from data $\mathcal{D}_t = \{(\mathbf{X}, \mathbf{y})\}$, best value $y_{best}$, number of samples $S$.
\State \textbf{Define:} GP prior $f_{prior} \sim \mathcal{GP}(0, k)$.
\State \textbf{Pre-computation:}
\State \quad Compute matrix inverse $\mathbf{W} = [k(\mathbf{X}, \mathbf{X}) + \sigma_n^2 \mathbf{I}]^{-1}$.
\State \textbf{Monte Carlo Estimation:}
\State \quad Initialize total improvement $I_{total} = 0$.
\For{$s = 1$ to $S$}
    \State Draw a sample function from the GP prior: $f_{prior}^{(s)} \sim \mathcal{GP}(0, k)$.
    \State Evaluate prior sample at observed data points: $\mathbf{y}_{prior}^{(s)} = f_{prior}^{(s)}(\mathbf{X})$.
    \State Evaluate prior sample at candidate point: $y_{cand\_prior}^{(s)} = f_{prior}^{(s)}(\boldsymbol{\Lambda})$.
    \State Generate posterior sample using Matheron's rule:
    \Statex \qquad $y_{post}^{(s)} = y_{cand\_prior}^{(s)} + k(\boldsymbol{\Lambda}, \mathbf{X}) \mathbf{W} (\mathbf{y} - \mathbf{y}_{prior}^{(s)})$.
    \State Calculate improvement for the sample: $I_s = \max(0, y_{post}^{(s)} - y_{best})$.
    \State Accumulate improvement: $I_{total} = I_{total} + I_s$.
\EndFor
\State \textbf{Return:} Estimated EEI: $\widehat{\text{EI}}(\boldsymbol{\Lambda}) = I_{total} / S$.
\end{algorithmic}
\end{algorithm}

\newpage
\section{Error Fields of Image Representation}
\begin{figure}[!htbp]
\centering
\includegraphics[width=0.9\linewidth]{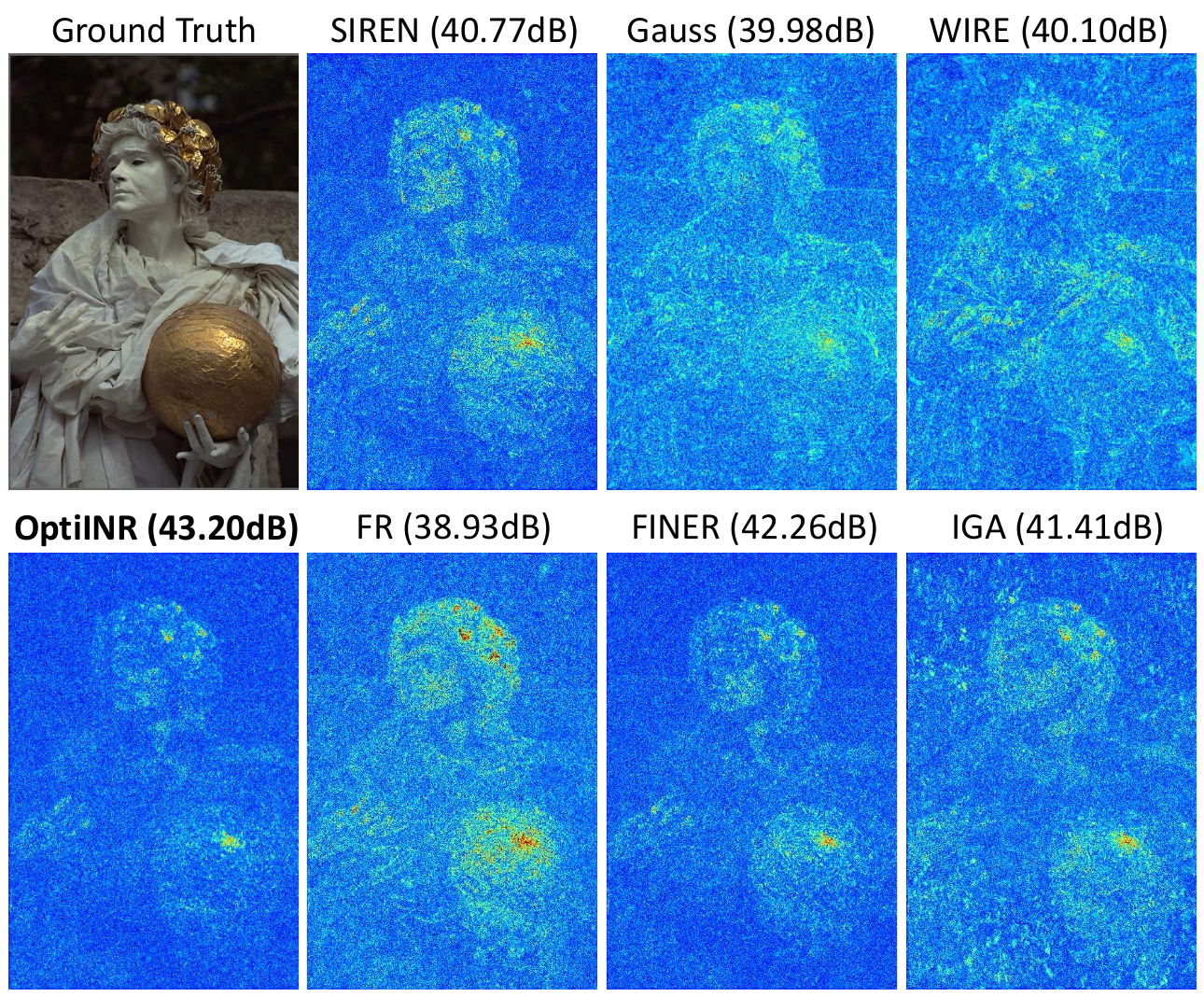}
\caption{\small Residual heatmap visualization on Kodak~17 with respect to the reference image. For each baseline, we compute per\mbox{-}pixel absolute differences to the reference (averaged over RGB), normalize them to $[0,1]$, and enhance visibility using gain ($\mathrm{GAIN}{=}16$) and gamma ($\gamma{=}0.6$). The residuals are colorized using the jet colormap, where blue indicates low error and red indicates high error, and they are overlaid on the reconstructed image with an opacity of 0.85.
}
\label{fig:kodak17}
\end{figure}

\begin{figure}[!htbp]
\centering
\includegraphics[width=0.9\linewidth]{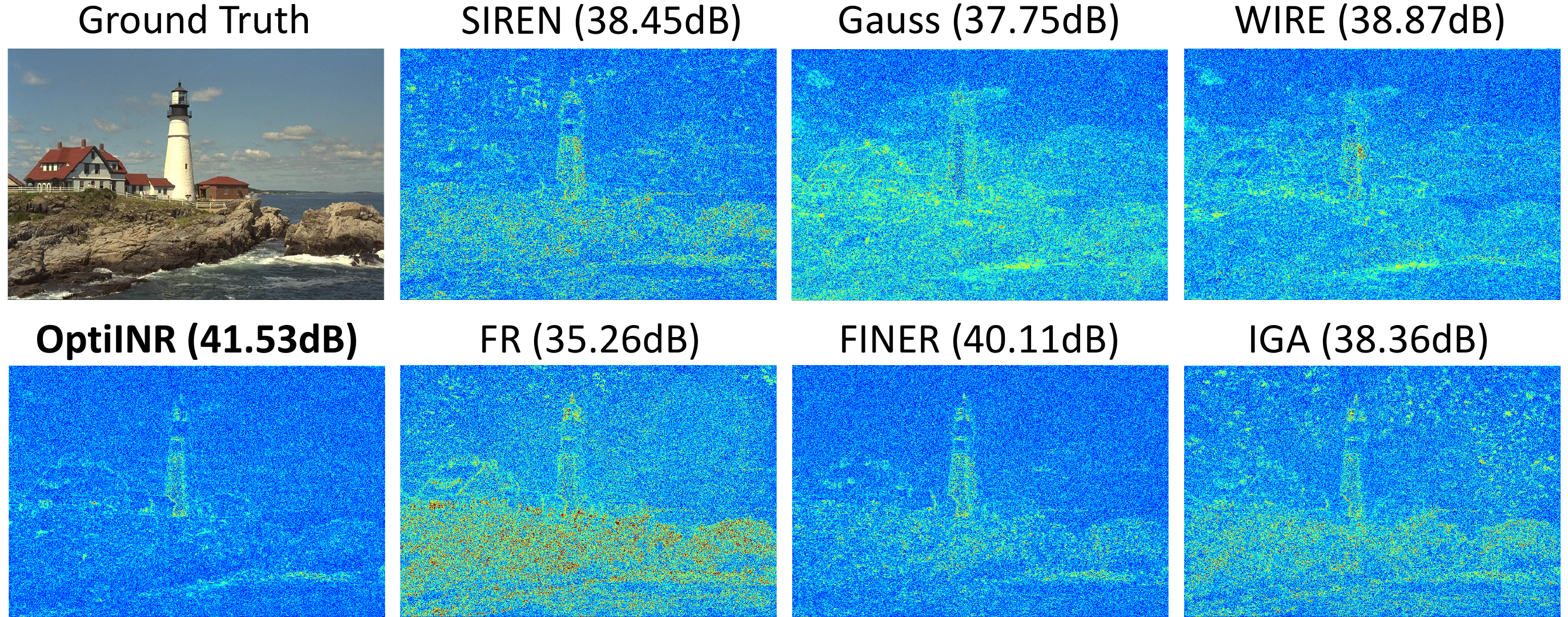}
\caption{\small Residual heatmap visualization on Kodak~21 with respect to the reference image. For each baseline, we compute per\mbox{-}pixel absolute differences to the reference (averaged over RGB), normalize them to $[0,1]$, and enhance visibility using gain ($\mathrm{GAIN}{=}16$) and gamma ($\gamma{=}0.6$). The residuals are colorized using the jet colormap, where blue indicates low error and red indicates high error, and they are overlaid on the reconstructed image with an opacity of 0.85.
}
\label{fig:kodak21}
\end{figure}

\newpage
\section{Spectral Analysis of Audio Reconstruction}
\begin{figure*}[!htbp]
  \centering
\includegraphics[width=\linewidth]{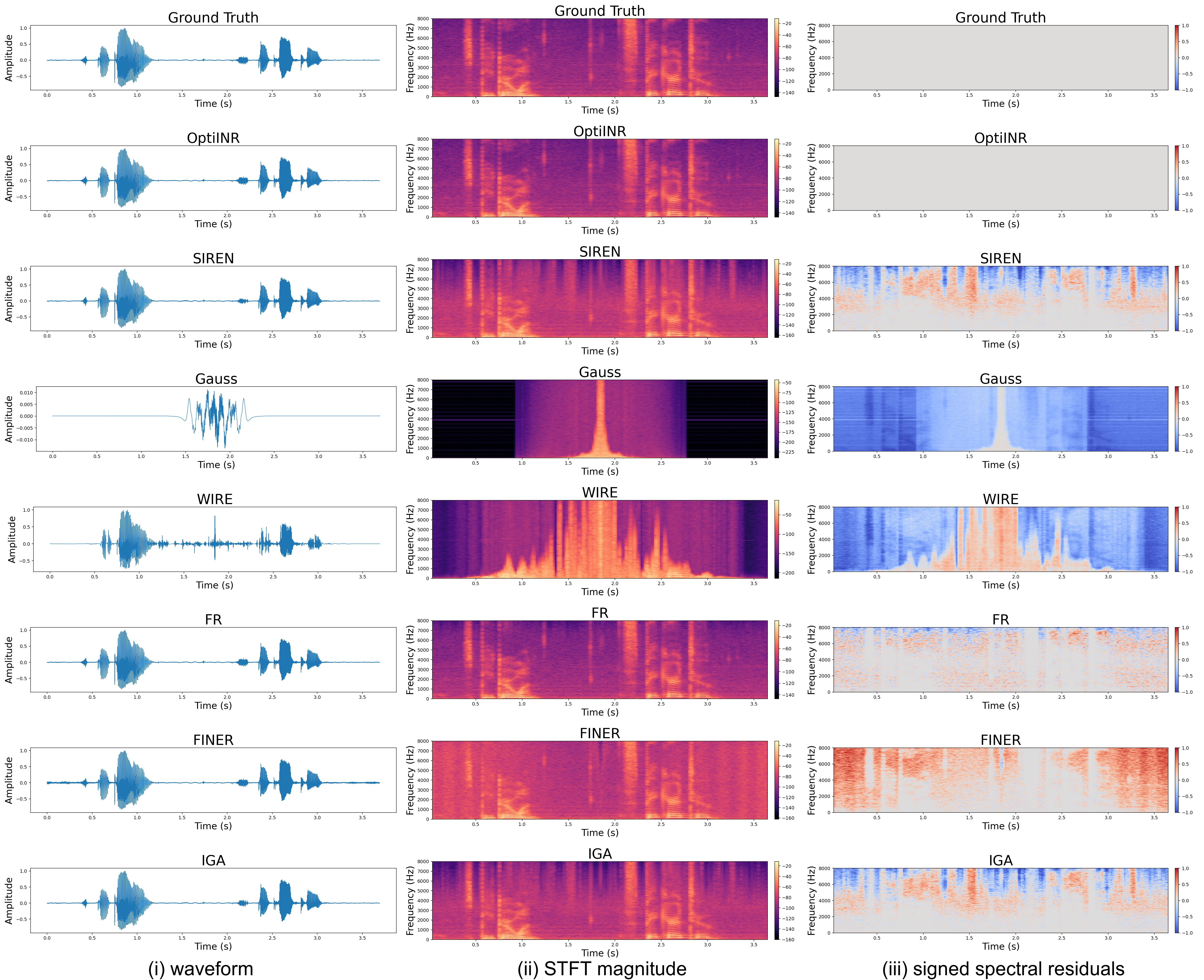}
  \caption{
Columns show: (i) waveform, (ii) STFT magnitude (in dB), and (iii) signed spectral residuals. 
Rows (top to bottom) correspond to: Ground Truth, OptiINR, SIREN, Gauss, WIRE, FR, FINER, and IGA. 
The experiment is conducted on the \texttt{TwoSpeakers} dataset. 
The STFT was computed using a Hann window with a frame length of 1024 samples and a hop size of 256 samples, and results are visualized with a \texttt{magma} colormap. 
Residual maps are obtained by subtracting the reference STFT (in dB) from the test STFT (in dB), followed by 99.5\% percentile clipping, a gain of 1.0, and gamma correction of 0.9. 
Residual heatmaps use a zero-centered diverging colormap, where blue indicates regions where the reference has stronger energy and red indicates regions where the test signal has stronger energy.
}
  \label{fig:twospeakers}
\end{figure*}

\newpage
\begin{figure*}[!htbp]
  \centering
  \includegraphics[width=\linewidth]{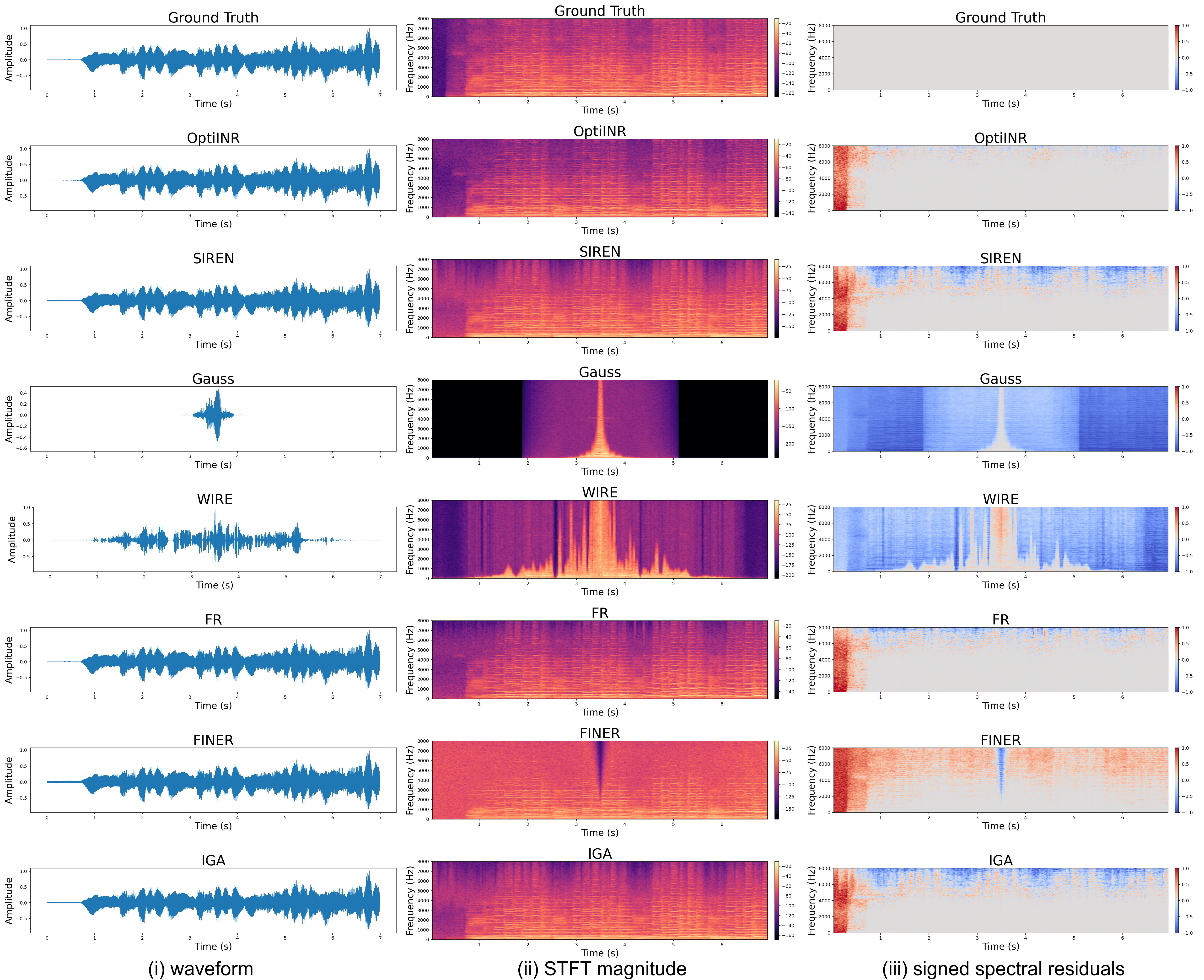}
  \caption{
Same setting as Fig.~\ref{fig:twospeakers}, but on the \texttt{Bach} dataset.
}
  \label{fig:bach}
\end{figure*}

\newpage
\begin{figure*}[!htbp]
  \centering
  \includegraphics[width=\linewidth]{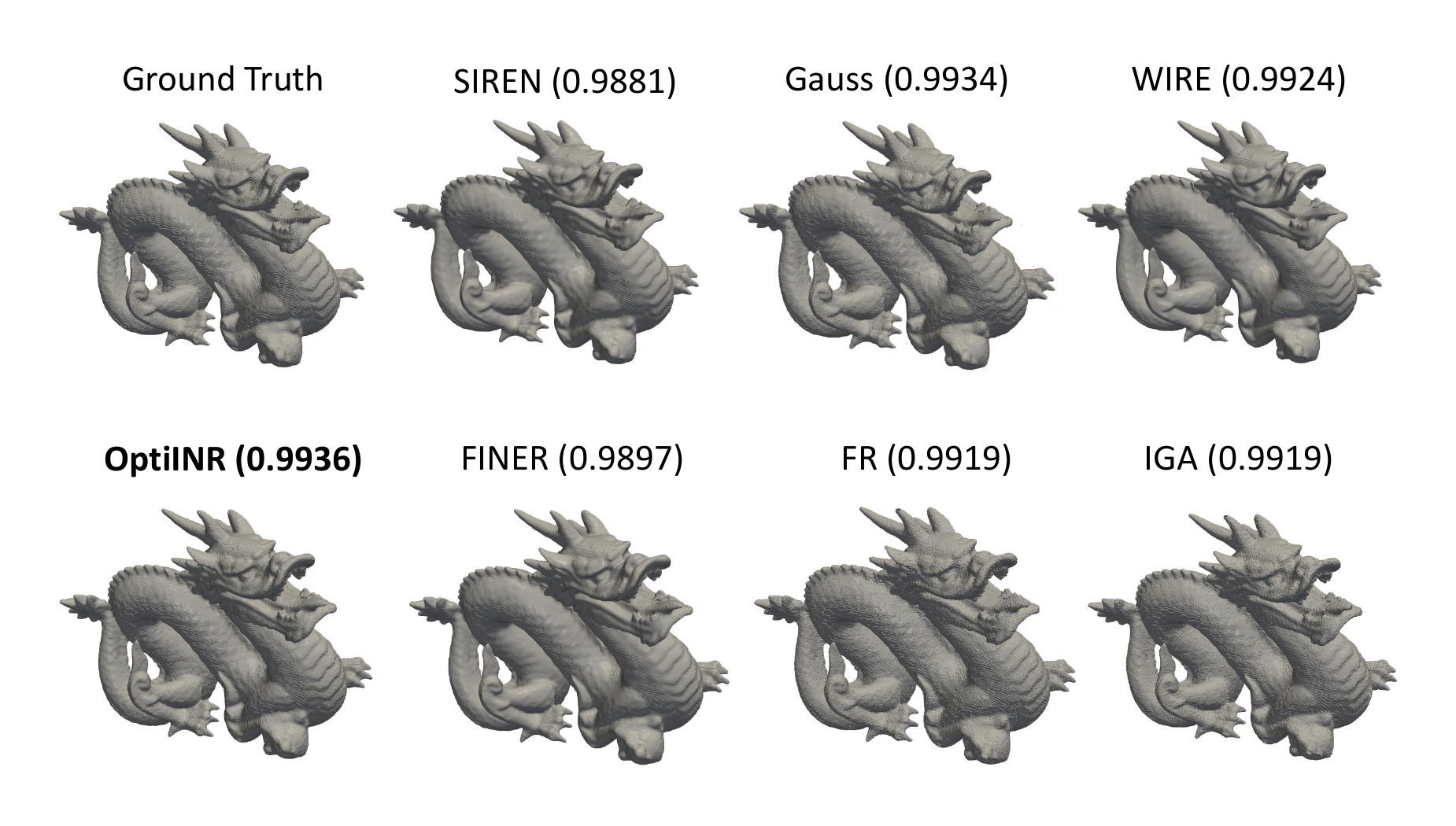}
  \caption{
Visualization of 3D dataset Dragon
}
  \label{fig:dragon}
\end{figure*}

\begin{figure*}[!htbp]
  \centering
  \includegraphics[width=\linewidth]{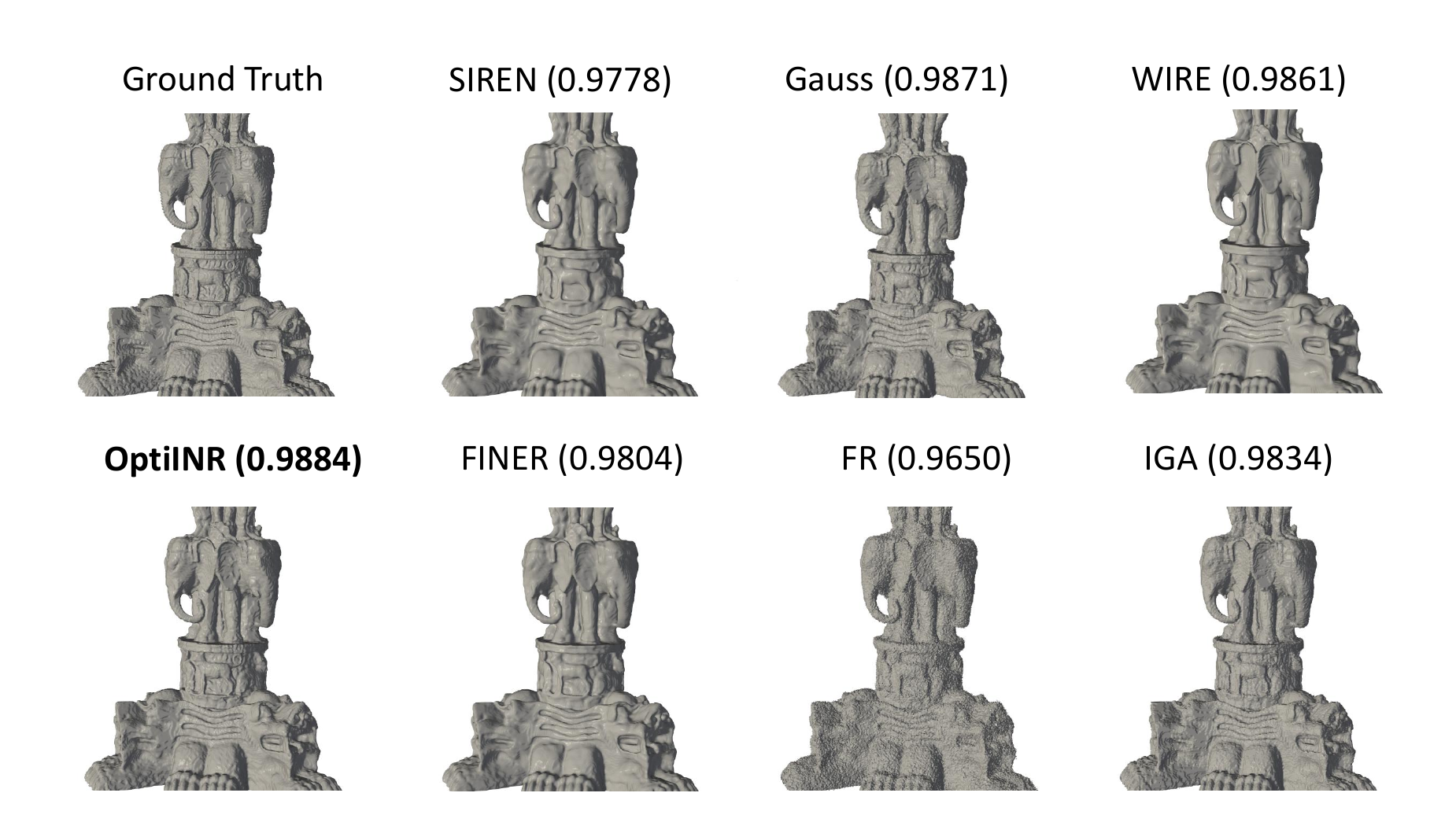}
  \caption{
Visualization of 3D dataset Thai Statue
}
  \label{fig:Thai}
\end{figure*}
\newpage
\section{Theoretical Analysis of \ours}
Our work is predicated on the claim that the heuristic-driven configuration of Implicit Neural Representations can be replaced by a principled, globally-aware optimization process. This section provides the theoretical underpinnings for our framework, OptiINR. We first formalize the INR configuration landscape and prove the necessity of a global search strategy over greedy alternatives. We then connect the configuration problem to the spectral properties of the network's Neural Tangent Kernel (NTK), providing a deeper understanding of what is being optimized. Finally, we establish the theoretical soundness and computational feasibility of our Bayesian optimization approach with formal proofs.

\subsection{The Global Nature of the INR Configuration Problem}
\label{theo:global}
We begin by formally defining the problem. Let $\mathcal{L}$ be the high-dimensional, mixed-variable space of all possible network configurations, as defined in Section 3.2. Our objective is to find an optimal configuration $\boldsymbol{\Lambda}^*$ that maximizes a performance metric $f(\boldsymbol{\Lambda})$, such as the peak signal-to-noise ratio (PSNR) on a validation set:
\[
\boldsymbol{\Lambda}^* = \arg\max_{\boldsymbol{\Lambda} \in \mathcal{L}} f(\boldsymbol{\Lambda})
\]
The function $f: \mathcal{L} \to \mathbb{R}$ is a black-box function; we have no analytical expression for it, and its evaluation requires instantiating and training an entire INR model, which is computationally expensive. Furthermore, the function is highly non-convex due to the complex, non-linear interactions between the architectural choices for each layer. The optimal choice of activation and initialization for a given layer $l$ is deeply conditioned on the choices made for all other layers.

\begin{proposition}
A greedy, layer-wise optimization strategy for INR configuration is not guaranteed to find the globally optimal network configuration $\boldsymbol{\Lambda}^*$.
\end{proposition}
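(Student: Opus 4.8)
The plan is to prove the proposition by explicit counterexample: exhibit a concrete instance of the configuration problem — a network depth, per-layer choice sets, and a performance function $f$ — on which \emph{every} greedy layer-wise procedure provably returns a configuration $\boldsymbol{\Lambda}_{\mathrm{greedy}}$ with $f(\boldsymbol{\Lambda}_{\mathrm{greedy}}) < f(\boldsymbol{\Lambda}^*)$. First I would pin down what "greedy" means, since the strategy must be defined before it can be defeated: at stage $l$ the procedure has committed $(\boldsymbol{\lambda}_1,\dots,\boldsymbol{\lambda}_{l-1})$ and sets $\boldsymbol{\lambda}_l \in \argmax_{\boldsymbol{\lambda}} g_l(\boldsymbol{\lambda}_1,\dots,\boldsymbol{\lambda}_{l-1},\boldsymbol{\lambda})$ for a myopic surrogate $g_l$ — for instance the validation performance of the depth-$l$ truncated sub-network (the MIRE-style residual-fitting score), or the best completion of the remaining layers under fixed default settings. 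The argument is robust to this choice: all that is required is that $g_l$ depends only on layers $1,\dots,l$, or on a fixed non-adaptive completion of the deeper layers, which is exactly the defining feature of layer-wise greediness.

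Next I would build the minimal witnessing instance. Take $L=2$, fix every hyperparameter except the two hidden-layer activations, and restrict each to a two-element set $\{\siren, \wire\}$, giving four configurations arranged in a $2\times 2$ table of $f$-values (writing $f$ as a function of the two activations, all else held fixed). I would then select these four values so that (i) the stage-1 surrogate $g_1$ strictly prefers $\sigma_1 = \siren$ — the shallow/partial network with a \siren\ first layer scores higher — while (ii) the row maximum $\max_{\sigma_2} f(\siren,\sigma_2)$ is strictly below $\max_{\sigma_2} f(\wire,\sigma_2)$, so that $\boldsymbol{\Lambda}^*$ lies in the $\sigma_1 = \wire$ row. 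Greedy commits to \siren\ at stage 1 and can never re-enter the optimal branch, hence it terminates strictly below $f(\boldsymbol{\Lambda}^*)$. At bottom this is just the classical fact that greedy maximization of a non-separable objective carries no optimality guarantee, instantiated for $\mathcal{L}$; I would additionally remark that the obstruction recurs whenever $f$ fails to be additively or monotonically decomposable across layers, which the non-convex, cross-layer-coupled structure of INR training generically violates.

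The main obstacle is \emph{realizability}: one may object that the $2\times 2$ table is a mathematical fiction and that no genuinely trainable INR induces such an $f$. I would answer this on two fronts. First, $f$ is assumed to be a black box with no analytic structure, so nothing in the problem statement excludes the required value pattern; the burden falls on the contrary claim — that greedy \emph{always} succeeds — which would force $f$ to possess a decomposability property it is never granted. Second, and this is where the real work goes, I would argue the counterexample is not knife-edge: among performance functions compatible with the mixed-variable product structure, the subset for which the greedy prefix happens to lie on the optimal branch is non-generic (positive codimension / nowhere dense), so "greedy coincides with global" is the exceptional case. If a fully explicit realization is demanded, I would fall back on the empirical record already in the paper — task-optimal configurations that pair spatially-localized early layers (Gaussian, \finerpp) with periodic deeper layers, and reported performance swings exceeding $10$ dB PSNR from single-layer changes — as a concrete population of instances exhibiting precisely the cross-layer conditioning the counterexample exploits. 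Either route closes the argument: no greedy layer-wise scheme can be guaranteed to recover $\boldsymbol{\Lambda}^*$, which is the claim.
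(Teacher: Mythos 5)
Your proposal is correct and takes essentially the same route as the paper: both arguments construct a two-layer, two-choices-per-layer payoff table in which the greedy stage-one surrogate commits to the row not containing the global maximizer, so the greedy result is strictly suboptimal. The paper simply instantiates the table with explicit numbers ($f(A,A)=12$, $f(A,B)=5$, $f(B,A)=10$, $f(B,B)=8$) under an averaging surrogate, whereas you leave the entries implicit and add a (not strictly necessary) discussion of realizability.
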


\begin{proof}
Let the full configuration be $\boldsymbol{\Lambda} = (\boldsymbol{\lambda}_1, \dots, \boldsymbol{\lambda}_L)$. A greedy strategy solves a sequence of local problems:
\[
\boldsymbol{\lambda}_l^* = \arg\max_{\boldsymbol{\lambda}_l} f(\boldsymbol{\lambda}_l | \boldsymbol{\lambda}_1^*, \dots, \boldsymbol{\lambda}_{l-1}^*) \quad \text{for } l=1, \dots, L
\]
Let the solution found by this greedy procedure be $\boldsymbol{\Lambda}_G = (\boldsymbol{\lambda}_1^*, \dots, \boldsymbol{\lambda}_L^*)$. To show that this procedure is not globally optimal, it is sufficient to construct a counterexample. Consider a simple 2-layer network where the configuration space for each layer consists of two choices, $A$ and $B$, such that $\boldsymbol{\lambda}_l \in \{A, B\}$. Let the performance function $f(\boldsymbol{\lambda}_1, \boldsymbol{\lambda}_2)$ be defined by the following payoff matrix:
\begin{center}
\begin{tabular}{c|cc}
$f(\boldsymbol{\lambda}_1, \boldsymbol{\lambda}_2)$ & $\boldsymbol{\lambda}_2=A$ & $\boldsymbol{\lambda}_2=B$ \\
\hline
$\boldsymbol{\lambda}_1=A$ & 12 & 5 \\
$\boldsymbol{\lambda}_1=B$ & 10 & 8 \\
\end{tabular}
\end{center}
The greedy procedure first optimizes for layer 1. Assuming it considers an expected performance over the choices for layer 2, it would compare the expected performance of choosing $A$ for layer 1 (average is $(12+5)/2 = 8.5$) versus choosing $B$ (average is $(10+8)/2 = 9$). The greedy choice is $\boldsymbol{\lambda}_1^* = B$. Fixing this, it then optimizes for layer 2: $\arg\max_{\boldsymbol{\lambda}_2 \in \{A,B\}} f(B, \boldsymbol{\lambda}_2)$, which yields $\boldsymbol{\lambda}_2^* = A$. The greedy solution is thus $\boldsymbol{\Lambda}_G = (B, A)$ with a performance of $f(B,A) = 10$. However, the true global optimum is $\boldsymbol{\Lambda}^* = (A, A)$ with a performance of $f(A,A) = 12$. Since $f(\boldsymbol{\Lambda}_G) < f(\boldsymbol{\Lambda}^*)$, this counterexample demonstrates that due to the interdependencies between layers, a locally optimal choice can preclude a globally optimal solution. Therefore, a globally-aware search strategy, as employed by OptiINR, is necessary.
\end{proof}

\subsection{Connecting Configuration to Spectral Properties via the Neural Tangent Kernel}
\label{theo:ntk}
To understand what is being optimized at a more fundamental level, we turn to the Neural Tangent Kernel (NTK). The NTK provides a powerful theoretical lens for analyzing the training dynamics of infinitely wide neural networks, connecting them to kernel regression. The NTK, $K(\mathbf{x}, \mathbf{x}' ; \theta)$, describes the inner product of gradients with respect to the network parameters $\theta$. Crucially, the training dynamics of a network are governed by the spectral properties of its NTK; specifically, the convergence rate for different frequency components of a target function is determined by the corresponding eigenvalues of the NTK matrix.

\begin{claim}
The INR configuration vector $\boldsymbol{\Lambda}$ implicitly defines an effective Neural Tangent Kernel, $K_{\boldsymbol{\Lambda}}$, at initialization. The optimization of the performance metric $f(\boldsymbol{\Lambda})$ can be viewed as a proxy for optimizing the properties of this induced kernel to best match the spectral characteristics of the target signal $g$.
\[
\max_{\boldsymbol{\Lambda} \in \mathcal{L}} f(\boldsymbol{\Lambda}) \iff \max_{\boldsymbol{\Lambda} \in \mathcal{L}} \text{Quality}(K_{\boldsymbol{\Lambda}}, g)
\]
\end{claim}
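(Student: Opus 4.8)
The plan is to turn the two informal halves of the claim into precise objects and then connect them by an explicit strictly monotone map, so that the stated $\iff$ reduces to an argmax-preservation statement. First I would fix an arbitrary configuration $\boldsymbol{\Lambda} = (\boldsymbol{\lambda}_1,\dots,\boldsymbol{\lambda}_L)$ and note that every ingredient needed to run the infinite-width NTK recursion is encoded in it: the per-layer activation $\sigma_l$ together with its scale ($\omega_0$ or $s_0$), the optional Fourier positional encoding $\gamma$, the weight-initialization variances, and the per-layer learning rates (which rescale the gradient blocks and hence reweight the layerwise contributions to the tangent kernel). Composing the standard layer recursions then yields a deterministic symmetric positive semi-definite kernel $K_{\boldsymbol{\Lambda}}(\mathbf{x},\mathbf{x}')$ depending only on $\boldsymbol{\Lambda}$ and the coordinate domain $\mathcal{X}$; this is the well-defined map $\boldsymbol{\Lambda}\mapsto K_{\boldsymbol{\Lambda}}$ asserted in the first sentence. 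Evaluated on the dataset $\mathcal{D}=\{\mathbf{x}_i\}_{i=1}^N$ it gives the Gram matrix $\mathbf{K}_{\boldsymbol{\Lambda}}$ with eigenpairs $(\mu_i,\mathbf{v}_i)$.

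Second, I would specialize to the squared-error / PSNR setting of the image and audio experiments and to gradient-flow training in the lazy regime, where the training residual obeys $\dot{\mathbf{r}}_t=-\mathbf{K}_{\boldsymbol{\Lambda}}\mathbf{r}_t$ and hence, after the fixed budget $T$, $\mathbf{r}_T = e^{-\mathbf{K}_{\boldsymbol{\Lambda}}T}(\mathbf{f}_0-\mathbf{g})$, with $\mathbf{g}$ the target sampled on $\mathcal{D}$ and $\mathbf{f}_0$ the network output at initialization. Projecting onto the eigenbasis gives the identity $\|\mathbf{r}_T\|^2 = \sum_i e^{-2\mu_i T}\langle\mathbf{v}_i,\mathbf{f}_0-\mathbf{g}\rangle^2$, which literally says the attainable reconstruction error is governed by how well the target's energy lies in the large-eigenvalue eigenspaces of $K_{\boldsymbol{\Lambda}}$ — the ``spectral match'' of the claim. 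Since the output layer is initialized with $\mathcal{U}(-10^{-4},10^{-4})$, the $\mathbf{f}_0$-dependent terms are a lower-order correction that is, to leading order, independent of $\boldsymbol{\Lambda}$; dropping it motivates defining
\[
\mathrm{Quality}(K_{\boldsymbol{\Lambda}},g)\;:=\;-\sum_i e^{-2\mu_i(\boldsymbol{\Lambda})\,T}\,\langle \mathbf{v}_i(\boldsymbol{\Lambda}),\mathbf{g}\rangle^2,
\]
the (negated) residual energy of the target left by the kernel's eigenflow, which is a genuine functional of $(K_{\boldsymbol{\Lambda}},g)$ alone because $T$ is a fixed constant and $\mathbf{g}$ a fixed sampling of $g$.

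Third, I would close the loop. Because $f(\boldsymbol{\Lambda})=\mathrm{PSNR}=10\log_{10}(\mathrm{peak}^2)-10\log_{10}\!\big(\tfrac1N\|\mathbf{r}_T\|^2\big)$ and $\|\mathbf{r}_T\|^2 = -\,\mathrm{Quality}(K_{\boldsymbol{\Lambda}},g)$ up to the lower-order term above, $f$ is the composition of $\mathrm{Quality}$ with the strictly increasing map $u\mapsto -10\log_{10}(-u/N)+\mathrm{const}$ on the range $u<0$; strictly monotone reparametrizations preserve maximizers, so $\arg\max_{\boldsymbol{\Lambda}\in\mathcal{L}} f(\boldsymbol{\Lambda})=\arg\max_{\boldsymbol{\Lambda}\in\mathcal{L}}\mathrm{Quality}(K_{\boldsymbol{\Lambda}},g)$, which is the claimed equivalence. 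For the 3D occupancy task the same template goes through with the squared loss replaced by the linearized logistic dynamics, yielding an analogous spectral-alignment functional.

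The main obstacle — and the reason the result is best stated as a controlled idealization rather than an exact theorem — is the lazy-training hypothesis: it holds exactly only in the infinite-width, small-step limit, whereas the high-capacity activations that dominate our search space (SIREN, WIRE, FINER++) are known to train partly in a feature-learning regime in which the empirical NTK drifts during optimization. I would manage this by (i) stating the equivalence cleanly for the width-$\to\infty$, near-constant-NTK regime; (ii) noting that at finite width the same eigenflow identity holds with $\mathbf{K}_{\boldsymbol{\Lambda}}$ replaced by a time-averaged effective kernel $\bar{\mathbf{K}}_{\boldsymbol{\Lambda}}$, so $\mathrm{Quality}(\bar K_{\boldsymbol{\Lambda}},g)$ remains the right proxy up to higher-order corrections; and (iii) appealing to the experiments, where configurations that maximize $f$ do exhibit the predicted spectral adaptation — spatially localized kernels in early layers for smooth content, broadband wavelet-like kernels for textured content — exactly what this claim forecasts.
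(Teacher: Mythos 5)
Your proposal is correct in its own terms, but it takes a genuinely different and considerably more constructive route than the paper. The paper never actually proves the $\iff$: it treats the Claim as an interpretive statement and supports it indirectly via the subsequent Theorem, whose proof writes out the NTK layer recursion $K_l(\mathbf{x},\mathbf{x}') = K_{l-1}(\mathbf{x},\mathbf{x}')\,\mathbb{E}[\sigma_l'(a_l(\mathbf{x}))\sigma_l'(a_l(\mathbf{x}'))] + f_{l-1}(\mathbf{x})f_{l-1}(\mathbf{x}')$, observes qualitatively that periodic versus localized activations imprint periodic versus localized structure on $K_{\boldsymbol{\Lambda}}$, and then appeals to the empirical spectra in the audio experiments; the functional $\text{Quality}(\cdot,\cdot)$ is never defined. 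You instead supply the missing definition: under lazy-training gradient flow you derive $\mathbf{r}_T = e^{-\mathbf{K}_{\boldsymbol{\Lambda}}T}(\mathbf{f}_0-\mathbf{g})$, define $\text{Quality}$ as the negated post-eigenflow residual energy of the target, and obtain the argmax equivalence as a strict monotone reparametrization of PSNR. This buys a precise, checkable statement (and your observation that per-layer learning rates reweight the layerwise NTK contributions is a point the paper misses), at the cost of hypotheses the paper never invokes: constancy of the NTK over $10{,}000$ epochs, negligible $\mathbf{f}_0$, and width $\to\infty$ — assumptions that are genuinely strained for SIREN/WIRE/FINER-type activations, as you correctly flag. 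The paper's route is weaker but commitment-free; yours converts an informal proxy claim into a theorem whose validity is confined to the lazy regime. Neither is a complete proof of the Claim as literally stated, since the biconditional can only hold after one fixes a definition of $\text{Quality}$ — which is exactly what you do and the paper does not.
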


\begin{theorem}
The choice of activation function $\sigma_l$ in the configuration tuple $\boldsymbol{\lambda}_l$ fundamentally alters the functional form and spectral properties of the resulting Neural Tangent Kernel $K_{\boldsymbol{\Lambda}}$.
\end{theorem}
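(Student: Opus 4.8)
The plan is to reduce the statement to an explicit computation of the \emph{dual activation maps} that generate the NTK, and then to show that, for the activation families appearing in the categorical slot of $\boldsymbol{\lambda}_l$, these maps land in genuinely distinct function classes that no choice of continuous hyperparameters can reconcile. Recall that for an $L$-layer MLP the NTK at initialization is assembled by a layer-wise recursion: writing $\Lambda^{(\ell)}$ for the $2\times 2$ covariance of the pre-activations $(u,v)$ at layer $\ell$ associated to a pair of inputs, one has $\Sigma^{(\ell)} \propto \mathbb{E}_{(u,v)\sim\mathcal{N}(0,\Lambda^{(\ell)})}[\sigma_\ell(u)\sigma_\ell(v)]$, $\dot\Sigma^{(\ell)} \propto \mathbb{E}_{(u,v)\sim\mathcal{N}(0,\Lambda^{(\ell)})}[\sigma_\ell'(u)\sigma_\ell'(v)]$, and $K^{(\ell)}_{\boldsymbol{\Lambda}} = \Sigma^{(\ell)} + \dot\Sigma^{(\ell)}\,K^{(\ell-1)}_{\boldsymbol{\Lambda}}$, so that $K_{\boldsymbol{\Lambda}} = K^{(L)}_{\boldsymbol{\Lambda}}$ is completely determined by the sequence of dual maps induced by $\sigma_1,\dots,\sigma_L$. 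The binary flag $\mathcal{I}_l$ and the continuous entries of $\mathbf{p}_l$ enter only as a scalar prefactor and an affine rescaling of $\Lambda^{(\ell)}$ — a smooth reparameterization — so it suffices to show that flipping the categorical slot $\sigma_\ell$ moves the pair $(\Sigma^{(\ell)},\dot\Sigma^{(\ell)})$ outside its reparameterization orbit.

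First I would evaluate the Gaussian expectations for the analytically tractable families. For $\sigma(x)=\sin(\omega_0 x)$, the product-to-sum identity together with the Gaussian characteristic function yields an \emph{exponential-of-covariance} form, $\Sigma^{(\ell)} \propto \exp\!\big(-\tfrac{\omega_0^2}{2}(\Lambda^{(\ell)}_{11}+\Lambda^{(\ell)}_{22}-2\Lambda^{(\ell)}_{12})\big) - \exp\!\big(-\tfrac{\omega_0^2}{2}(\Lambda^{(\ell)}_{11}+\Lambda^{(\ell)}_{22}+2\Lambda^{(\ell)}_{12})\big)$, with $\dot\Sigma^{(\ell)}$ of the same shape up to a factor $\omega_0^2$ and a sign. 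For the Gaussian activation $\sigma(x)=e^{-(s_0x)^2}$ the integral is elementary and gives a \emph{rational} form $\Sigma^{(\ell)}\propto \big(\mathrm{det}(\mathbf{I}+2s_0^2\Lambda^{(\ell)})\big)^{-1/2}$; for the \wire\ Gabor activation, completing the square with the complex linear term produces a \emph{complex-modulated Gaussian}, which enters the real NTK through its $2\times2$ real block form. These three normal forms are pairwise non-identifiable — an exponential-of-covariance is neither a rational function of $\Lambda^{(\ell)}$ nor a modulated Gaussian for any $\omega_0,s_0$ — and since the downstream maps in the recursion are analytic and non-constant, a change at a single layer propagates to a change in the functional form of $K_{\boldsymbol{\Lambda}}$; this establishes the first half of the claim.

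To upgrade "functional form" to "spectral properties", I would specialize to translation-invariant inputs (coordinates on a torus, or the stationary restriction standard in Fourier-feature analyses) so that $K_{\boldsymbol{\Lambda}}(\mathbf{x},\mathbf{x}')=\kappa_{\boldsymbol{\Lambda}}(\mathbf{x}-\mathbf{x}')$ and the NTK eigenvalues coincide with the Fourier coefficients $\widehat{\kappa}_{\boldsymbol{\Lambda}}$. The normal forms above then have qualitatively different power spectra: the \siren\ NTK pushes spectral mass toward the scale set by $\omega_0$ and decays rapidly (a high-pass/shifted response); the Gaussian-activation NTK has an approximately Gaussian, low-pass spectrum with effective cutoff $\sim s_0$; and the \wire\ NTK has a band-pass spectrum centered near $\pm\omega_0$ with bandwidth controlled by $s_0$. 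Because these eigenvalue-decay profiles differ in shape, not merely in scale, the frequency-dependent convergence rates predicted by NTK theory differ across families, which is exactly the claimed alteration of spectral properties; it also makes the proxy statement of the preceding Claim precise, since $\text{Quality}(K_{\boldsymbol{\Lambda}}, g)$ is governed by the overlap of $\widehat{\kappa}_{\boldsymbol{\Lambda}}$ with the spectrum of the target signal $g$.

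The step I expect to be the main obstacle is the \finerpp\ activation $\sigma(x)=\sin(\omega_0(|x|+1)x)$, which is non-differentiable at the origin and carries a quadratic (chirp) phase, so its dual expectation has no elementary closed form. I plan to handle it without one: (i) $\sigma$ is Lipschitz and in $L^2(\mathcal{N})$, so the dual map is well defined and continuous and the recursion still goes through; (ii) a Hermite-expansion / stationary-phase argument extracts the dominant behavior — a sinusoid at the \emph{input-magnitude-modulated} frequency $\omega_0(|x|+1)$ — which shows $\Sigma^{(\ell)}$ fails to be stationary even for stationary inputs, unlike the \siren\ and Gaussian forms; and (iii) this modulation cannot be reproduced by any of the other three normal forms for any hyperparameter choice, placing \finerpp\ in yet another reparameterization class. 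One must also track the effect of $\mathcal{I}_l$ carefully in every case, but since it acts only by rescaling $\Lambda^{(\ell)}$ it cannot collapse the activation-induced distinctions. Assembling the cases then yields the theorem.
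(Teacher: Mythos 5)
Your proposal is correct and shares the paper's point of departure --- the layer-wise NTK recursion in which the activation enters through the Gaussian expectations $\mathbb{E}[\sigma_\ell(u)\sigma_\ell(v)]$ and $\mathbb{E}[\sigma_\ell'(u)\sigma_\ell'(v)]$ --- but from there the two arguments diverge substantially. The paper's proof stops at the observation that $\sigma_\ell'$ appears inside this expectation, and then argues qualitatively (periodic derivative $\Rightarrow$ periodic kernel structure; localized derivative $\Rightarrow$ localized kernel), finally appealing to an empirical figure. You instead compute the dual activation maps in closed form for each family --- the exponential-of-covariance form for $\sin(\omega_0 x)$, the determinant-power form $\det{\mathbf{I}+2s_0^2\Lambda^{(\ell)}}^{-1/2}$ for the Gaussian, and the complex-modulated Gaussian for the \wire Gabor --- and then prove the stronger statement that these normal forms lie in distinct reparameterization orbits, so that no setting of the continuous hyperparameters $\mathbf{p}_l$ can undo a change of the categorical slot. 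Your stationary-restriction/Fourier step also supplies an actual proof of the ``spectral properties'' half of the claim (high-pass vs.\ low-pass vs.\ band-pass eigenvalue profiles), which the paper asserts without derivation, and it makes the preceding Claim about $\text{Quality}(K_{\boldsymbol{\Lambda}},g)$ precise. What your route buys is a genuine non-identifiability theorem rather than a plausibility argument; what it costs is considerably more machinery, and one piece of it --- the \finerpp case, handled by a Hermite/stationary-phase sketch --- is still a plan rather than a completed computation. That is not a gap relative to the paper (which does not treat \finerpp at all), but it is the one step you would need to carry out in full before the case analysis closes. A minor simplification available to you: the per-layer learning rate inside $\mathbf{p}_l$ does not enter the NTK at initialization in any way, so it need not be tracked in the reparameterization orbit at all, which only strengthens your non-identifiability argument.
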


\begin{proof}
The NTK of a multi-layer perceptron is defined recursively. For an $L$-layer MLP, the kernel at the output layer is given by:
\[
K_L(\mathbf{x}, \mathbf{x}') = K_{L-1}(\mathbf{x}, \mathbf{x}') + f_{L-1}(\mathbf{x}) \cdot f_{L-1}(\mathbf{x}')
\]
and for the hidden layers $l=1, \dots, L-1$:
\[
K_l(\mathbf{x}, \mathbf{x}') = K_{l-1}(\mathbf{x}, \mathbf{x}') \cdot \mathbb{E}[\sigma_l'(a_l(\mathbf{x})) \sigma_l'(a_l(\mathbf{x}'))] + f_{l-1}(\mathbf{x}) \cdot f_{l-1}(\mathbf{x}')
\]
where $a_l(\cdot)$ are the pre-activations at layer $l$. The expectation is taken over the random initialization of the weights. The term $\mathbb{E}[\sigma_l'(a_l(\mathbf{x})) \sigma_l'(a_l(\mathbf{x}'))]$ directly incorporates the derivative of the activation function $\sigma_l$ into the kernel's definition.
If $\sigma_l$ is a periodic function like $\sin(\omega_0 x)$, its derivative is $\omega_0 \cos(\omega_0 x)$, which is also periodic. This imparts a periodic structure to the NTK, making it well-suited for signals with strong periodic components. If $\sigma_l$ is a localized function like a Gabor wavelet, its derivative is also localized, leading to an NTK that excels at representing signals with localized features. Since OptiINR's search space includes a categorical choice over these different activation families for each layer, it is directly searching for a network configuration that induces a kernel whose spectral properties are optimally aligned with the target signal. The empirical results in Figure 5, where the discovered configuration for an audio signal accurately represents its full frequency spectrum, provide strong evidence for this claim.
\end{proof}

\subsection{Theoretical Guarantees of the OptiINR Framework}
\label{theo:opt}
Having established the nature of the optimization problem, we now justify our choice of solver. Bayesian optimization is theoretically guaranteed to converge to the global optimum of a function, provided the surrogate model's kernel is valid.

\begin{lemma}
A function $k: \mathcal{X} \times \mathcal{X} \to \mathbb{R}$ is a valid positive semi-definite (PSD) kernel if for any finite set of points $\{x_1, \dots, x_n\} \subset \mathcal{X}$, the Gram matrix $K$ with entries $K_{ij} = k(x_i, x_j)$ is positive semi-definite.
\end{lemma}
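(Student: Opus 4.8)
The plan is to read ``valid PSD kernel'' in the strong sense the paper needs — namely that $k$ is the covariance function of a well-defined Gaussian process, equivalently the reproducing kernel of a Hilbert space $\Hcal$ of functions on $\mathcal{X}$, equivalently that $k(x,x') = \langle \phi(x), \phi(x') \rangle_{\Hcal}$ for some feature map $\phi$ — and to show this is equivalent to the stated finite Gram-matrix condition. The forward implication is immediate: if $k(x,x') = \langle \phi(x), \phi(x')\rangle_{\Hcal}$, then for any $\{x_1,\dots,x_n\}\subset\mathcal{X}$ and any $\c\in\RR^n$ we have $\c\Tr K \c = \sum_{i,j} c_i c_j \langle \phi(x_i),\phi(x_j)\rangle_{\Hcal} = \big\|\sum_i c_i \phi(x_i)\big\|_{\Hcal}^2 \ge 0$, and symmetry of the inner product gives $K = K\Tr$; hence every Gram matrix is PSD.

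For the converse — the substantive content — I would run the Moore–Aronszajn construction. Take $\Hcal_0 = \mathrm{span}\{ k(\cdot, x) : x \in \mathcal{X}\}$ with the bilinear form $\langle \sum_i a_i k(\cdot,x_i),\, \sum_j b_j k(\cdot,x_j)\rangle := \sum_{i,j} a_i b_j\, k(x_i,x_j)$. The finite-PSD hypothesis is used twice: first to guarantee $\langle f, f\rangle \ge 0$ for every $f\in\Hcal_0$; second, via the induced Cauchy–Schwarz inequality $|f(x)|^2 = |\langle f, k(\cdot,x)\rangle|^2 \le \langle f,f\rangle\, k(x,x)$, to show the resulting seminorm is a genuine norm (no nonzero function is collapsed) and that point evaluation stays continuous. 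Completing $\Hcal_0$ in this norm then yields an RKHS $\Hcal$ with kernel $k$ and feature map $\phi(x)=k(\cdot,x)$; equivalently, Kolmogorov consistency holds and a GP with covariance $k$ exists. This closes the equivalence, which is exactly what makes the GP posterior in Section~3.3 — and the downstream BO convergence argument — well-posed.

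Finally, since the Lemma is stated in order to certify the surrogate's product kernel, I would immediately record the three closure facts it implies: (i) if $k_1,k_2$ are PSD kernels then so are $k_1+k_2$ and $c\,k_1$ for $c\ge 0$ (Gram matrices add); (ii) the pointwise product $k_1 k_2$ is a PSD kernel, since its Gram matrix is the Hadamard product of two PSD matrices, PSD by the Schur product theorem; (iii) pointwise limits of PSD kernels are PSD, as the PSD cone is closed. Applying (ii) to $k_{\mathrm{cont}}$ (Matérn, PSD by Bochner's theorem via its nonnegative spectral density) and $k_{\mathrm{cat}}$ (a squared-exponential/ARD kernel on the one-hot embedding, PSD as a Gaussian kernel) shows $k = k_{\mathrm{cont}} \cdot k_{\mathrm{cat}}$ is a valid PSD kernel on $\mathcal{X}_{\mathrm{cont}} \times \mathcal{X}_{\mathrm{cat}}$. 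The only delicate point in the whole argument is the well-definedness and completion step in Moore–Aronszajn — showing the form is independent of the chosen representation of $f$ and that the completion does not leave $\RR^{\mathcal{X}}$ — which is precisely where the Cauchy–Schwarz bound involving $k(x,x)$ is indispensable; the rest is routine bookkeeping.
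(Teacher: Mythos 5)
Your proposal is mathematically sound, but note that the paper supplies no proof of this Lemma at all: it is stated bare and then invoked in the next theorem's proof as ``the definition in Lemma 1,'' i.e.\ the authors are using the finite Gram-matrix condition as the \emph{definition} of a valid PSD kernel, not as a characterization to be established. You instead read ``valid'' in the stronger sense (covariance of a well-defined GP / reproducing kernel of an RKHS) and prove the equivalence, with the converse direction carried by the Moore--Aronszajn construction. That is the correct and standard argument, and it is actually the missing link the paper's Remark leans on when it claims the kernel's validity makes the GP surrogate ``a well-defined probabilistic model'': without Moore--Aronszajn (or equivalently Kolmogorov consistency of the finite-dimensional Gaussians), the finite PSD condition alone does not obviously license a GP posterior. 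Your forward direction, the well-definedness and Cauchy--Schwarz steps in the completion, and the identification of where the hypothesis is used are all correct. The one caveat is scope: your closure facts (i)--(iii) and the application of the Schur product theorem to $k_{\mathrm{cont}}\cdot k_{\mathrm{cat}}$ reproduce the content of the paper's \emph{subsequent} Theorem rather than of this Lemma, so in the paper's organization that material belongs one result later; as a proof of the Lemma itself it is harmless surplus.
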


\begin{theorem}
The composite product kernel used in OptiINR, $k(\boldsymbol{\Lambda}, \boldsymbol{\Lambda}') = k_{\text{cont}}(\boldsymbol{\Lambda}_c, \boldsymbol{\Lambda}'_c) \times k_{\text{cat}}(\boldsymbol{\Lambda}_{\text{cat}}, \boldsymbol{\Lambda}'_{\text{cat}})$, is a valid positive semi-definite kernel.
\end{theorem}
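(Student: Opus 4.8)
The plan is to reduce the claim to two standard facts about positive semi-definite kernels: first, that each factor $k_{\text{cont}}$ and $k_{\text{cat}}$ is individually a valid PSD kernel on its respective domain; and second, that the pointwise (Hadamard) product of two PSD kernels is again PSD (the Schur product theorem). Combining these two observations immediately yields the result, so the bulk of the argument is verifying the hypotheses.

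First I would establish that $k_{\text{cont}}$ is PSD. Since $\boldsymbol{\Lambda}_c \in \mathcal{X}_{\text{cont}} \subseteq \mathbb{R}^{d_c}$ with the Euclidean metric, the Mat\'ern kernel with parameters $\nu, \ell > 0$ is a classical isotropic covariance function on $\mathbb{R}^{d_c}$; its positive-definiteness follows from Bochner's theorem, as its spectral density (proportional to $(2\nu/\ell^2 + \|\omega\|^2)^{-(\nu + d_c/2)}$) is a nonnegative integrable function, so $k_{\text{cont}}$ is the Fourier transform of a finite nonnegative measure and hence PSD. I would cite this rather than rederive it. Next, for $k_{\text{cat}}$: after one-hot encoding, the categorical block is mapped to a point in $\{0,1\}^{M} \subset \mathbb{R}^{M}$, and $k_{\text{cat}}$ is exactly an ARD squared-exponential (Gaussian RBF) kernel evaluated on these encoded vectors, namely $k_{\text{cat}}(\u, \v) = \exp\!\big(-\tfrac{1}{2}\sum_j (u_j - v_j)^2/\ell_j^2\big) = \exp\!\big(-\tfrac{1}{2}\|\,D^{-1}(\u - \v)\,\|_2^2\big)$ with $D = \diag(\ell_1,\dots,\ell_M)$. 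The Gaussian RBF kernel is PSD on all of $\mathbb{R}^M$ (again by Bochner, its spectral measure being Gaussian), and restricting a PSD kernel to any subset of its domain — here the one-hot codes — preserves positive semi-definiteness, since the Gram matrix of the restriction is a principal submatrix-type object built from the same kernel. Precomposing with the fixed linear map $D^{-1}$ and with the one-hot embedding does not affect this: if $k$ is PSD and $\varphi$ is any map into the domain of $k$, then $k \circ (\varphi \times \varphi)$ is PSD.

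Then I would invoke the Schur product theorem: if $K^{(1)}$ and $K^{(2)}$ are PSD Gram matrices of the same size, their Hadamard product $K^{(1)} \circ K^{(2)}$ is PSD. Concretely, fix any finite set $\{\boldsymbol{\Lambda}^{(1)}, \dots, \boldsymbol{\Lambda}^{(n)}\} \subset \mathcal{L}$, write $\boldsymbol{\Lambda}^{(i)} = (\boldsymbol{\Lambda}^{(i)}_c, \boldsymbol{\Lambda}^{(i)}_{\text{cat}})$, and let $A_{ij} = k_{\text{cont}}(\boldsymbol{\Lambda}^{(i)}_c, \boldsymbol{\Lambda}^{(j)}_c)$ and $B_{ij} = k_{\text{cat}}(\boldsymbol{\Lambda}^{(i)}_{\text{cat}}, \boldsymbol{\Lambda}^{(j)}_{\text{cat}})$. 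By the previous paragraph both $A$ and $B$ are PSD; the Gram matrix of the product kernel is exactly $A \circ B$, which is PSD by Schur. The cleanest one-line justification of Schur's theorem, which I would include, is via tensor decompositions: writing $A = \sum_r \a_r \a_r^{\rm T}$ and $B = \sum_s \b_s \b_s^{\rm T}$ as sums of rank-one PSD terms, one gets $A \circ B = \sum_{r,s} (\a_r \had \b_s)(\a_r \had \b_s)^{\rm T}$, a sum of rank-one PSD terms, hence PSD. Applying the Lemma (a kernel is valid iff all its finite Gram matrices are PSD) then finishes the proof.

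I do not anticipate a genuine obstacle; the only point needing mild care is the conditional/hierarchical structure of the search space — continuous hyperparameters $\mathbf{p}_l$ are only "active" conditional on the categorical choice $\sigma_l$, so $\mathcal{X}_{\text{cont}}$ is not literally a product of intervals but a union of slices of differing effective dimension. The clean way to handle this, which I would state explicitly, is that in the implementation every configuration is embedded into a fixed-dimensional ambient Euclidean space (inactive coordinates set to a default value), so $k_{\text{cont}}$ is evaluated as an honest Mat\'ern kernel on $\mathbb{R}^{d_c}$; the positive-definiteness argument above is unaffected, since it only requires the domain of $k_{\text{cont}}$ to sit inside a Euclidean space on which the Mat\'ern form is PSD. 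Thus the product kernel inherits validity regardless of the conditional bookkeeping.
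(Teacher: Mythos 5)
Your proposal is correct and follows essentially the same route as the paper's proof: establish that $k_{\text{cont}}$ (Mat\'ern) and $k_{\text{cat}}$ (squared-exponential on the one-hot codes) are each PSD, then apply the Schur product theorem to the Hadamard product of their Gram matrices. Your additions — the Bochner-theorem justification of the factors, the rank-one decomposition proof of Schur, and the remark on embedding the conditional search space into a fixed ambient $\mathbb{R}^{d_c}$ — only make the same argument more self-contained.
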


\begin{proof}
The proof relies on the Schur product theorem.
\begin{enumerate}
    \item The Matérn kernel, $k_{\text{cont}}$, is a known valid PSD kernel. Therefore, for any set of continuous configurations $\{\boldsymbol{\Lambda}_{c,1}, \dots, \boldsymbol{\Lambda}_{c,n}\}$, the Gram matrix $K_{cont}$ is PSD.
    \item The Squared Exponential kernel, used for $k_{\text{cat}}$ on the one-hot encoded space, is also a known valid PSD kernel. Thus, for any set of categorical configurations $\{\boldsymbol{\Lambda}_{cat,1}, \dots, \boldsymbol{\Lambda}_{cat,n}\}$, the Gram matrix $K_{cat}$ is PSD.
    \item The Schur product theorem states that if $A$ and $B$ are two $n \times n$ PSD matrices, then their element-wise (Hadamard) product, $(A \circ B)_{ij} = A_{ij} B_{ij}$, is also a PSD matrix.
    \item The Gram matrix of our composite kernel, $K_{comp}$, has entries $K_{comp, ij} = k(\boldsymbol{\Lambda}_i, \boldsymbol{\Lambda}_j) = k_{cont}(\boldsymbol{\Lambda}_{c,i}, \boldsymbol{\Lambda}_{c,j}) \times k_{cat}(\boldsymbol{\Lambda}_{cat,i}, \boldsymbol{\Lambda}_{cat,j})$. This is exactly the Hadamard product of the Gram matrices $K_{cont}$ and $K_{cat}$.
    \item Since $K_{cont}$ and $K_{cat}$ are PSD, their Hadamard product $K_{comp} = K_{cont} \circ K_{cat}$ is also PSD.
\end{enumerate}
Therefore, by the definition in Lemma 1, our composite product kernel is a valid PSD kernel. This ensures that our GP surrogate is a well-defined probabilistic model over the mixed-variable space, satisfying the preconditions for the convergence guarantees of Bayesian optimization.
\end{proof}

\begin{remark}
The established validity of our kernel ensures that as the number of evaluations grows, the posterior variance of the GP will concentrate around the true function $f(\boldsymbol{\Lambda})$, and an acquisition function like Expected Improvement will asymptotically guide the search towards the global optimum $\boldsymbol{\Lambda}^*$. This provides a strong theoretical justification for the design of OptiINR.
\end{remark}

\subsection{Computational Feasibility via Matheron's Rule}
\label{theo:math}
A theoretical guarantee of convergence is only meaningful if the method is computationally feasible. A potential bottleneck in our framework is the calculation of the Empirical Expected Improvement, which requires drawing many samples from the GP posterior. Naively generating $S$ samples at a candidate point requires a Cholesky decomposition of the posterior covariance, a process that does not scale well. We overcome this challenge by leveraging Matheron's Rule for efficient posterior sampling.

\begin{theorem}
Let $f \sim \mathcal{GP}(0, k)$ be a GP prior and let $\mathcal{D}_n = \{(\mathbf{X}, \mathbf{y})\}$ be a set of $n$ observations. A sample from the posterior process, $f_{post}(\cdot)$, can be expressed in distribution as:
\[
f_{post}(\cdot) \stackrel{d}{=} f_{prior}(\cdot) + k(\cdot, \mathbf{X}) [k(\mathbf{X},\mathbf{X}) + \sigma_n^2 \mathbf{I}]^{-1} (\mathbf{y} - f_{prior}(\mathbf{X}))
\]
where $f_{prior} \sim \mathcal{GP}(0, k)$ is a single sample drawn from the prior.
\end{theorem}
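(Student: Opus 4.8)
The plan is to verify the claimed identity by showing that both sides of the equation are Gaussian processes with the same mean function and the same covariance function, which suffices since a GP is uniquely determined by its first two moments. I would begin by defining the right-hand side explicitly as a random function $g(\cdot) := f_{prior}(\cdot) + k(\cdot, \mathbf{X}) \mathbf{A}^{-1}(\mathbf{y} - f_{prior}(\mathbf{X}))$, where I abbreviate $\mathbf{A} := k(\mathbf{X},\mathbf{X}) + \sigma_n^2\mathbf{I}$. The only source of randomness on the right-hand side is the single prior draw $f_{prior} \sim \mathcal{GP}(0,k)$ (the data $\mathbf{y}$ and the matrix $\mathbf{A}$ are fixed given $\mathcal{D}_n$), so $g$ is an affine transformation of the Gaussian object $f_{prior}$ and is therefore itself a Gaussian process. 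The task reduces to a moment computation.

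For the mean, I would take expectations over $f_{prior}$: since $\mathbb{E}[f_{prior}(\cdot)] = 0$ and $\mathbb{E}[f_{prior}(\mathbf{X})] = \mathbf{0}$, we get $\mathbb{E}[g(\cdot)] = k(\cdot,\mathbf{X})\mathbf{A}^{-1}\mathbf{y}$, which is exactly the GP posterior predictive mean $\mu_n(\cdot)$ recorded in the Background section. For the covariance, write $\mathbf{r}(\cdot) := k(\cdot,\mathbf{X})\mathbf{A}^{-1}$ so that $g(\cdot) = f_{prior}(\cdot) - \mathbf{r}(\cdot) f_{prior}(\mathbf{X}) + \mathbf{r}(\cdot)\mathbf{y}$; the last term is deterministic and drops out of the covariance. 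Then
\[
\mathrm{Cov}(g(\mathbf{x}), g(\mathbf{x}')) = \mathrm{Cov}\big(f_{prior}(\mathbf{x}) - \mathbf{r}(\mathbf{x})f_{prior}(\mathbf{X}),\ f_{prior}(\mathbf{x}') - \mathbf{r}(\mathbf{x}')f_{prior}(\mathbf{X})\big).
\]
Expanding bilinearly and using $\mathrm{Cov}(f_{prior}(\mathbf{x}), f_{prior}(\mathbf{x}')) = k(\mathbf{x},\mathbf{x}')$, $\mathrm{Cov}(f_{prior}(\mathbf{x}), f_{prior}(\mathbf{X})) = k(\mathbf{x},\mathbf{X})$, and $\mathrm{Cov}(f_{prior}(\mathbf{X}), f_{prior}(\mathbf{X})) = k(\mathbf{X},\mathbf{X})$, the four terms combine to $k(\mathbf{x},\mathbf{x}') - k(\mathbf{x},\mathbf{X})\mathbf{A}^{-1}k(\mathbf{X},\mathbf{x}') - k(\mathbf{x},\mathbf{X})\mathbf{A}^{-1}k(\mathbf{X},\mathbf{x}') + k(\mathbf{x},\mathbf{X})\mathbf{A}^{-1}k(\mathbf{X},\mathbf{X})\mathbf{A}^{-1}k(\mathbf{X},\mathbf{x}')$. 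The subtlety I need to handle carefully is that the GP posterior covariance is $k(\mathbf{x},\mathbf{x}') - k(\mathbf{x},\mathbf{X})\mathbf{A}^{-1}k(\mathbf{X},\mathbf{x}')$ with a single subtracted term, so I must show the last two terms collapse: writing $k(\mathbf{X},\mathbf{X}) = \mathbf{A} - \sigma_n^2\mathbf{I}$ gives $\mathbf{A}^{-1}k(\mathbf{X},\mathbf{X})\mathbf{A}^{-1} = \mathbf{A}^{-1} - \sigma_n^2\mathbf{A}^{-2}$, and after substitution the $+2$ and $-1$ coefficients on the $k(\mathbf{x},\mathbf{X})\mathbf{A}^{-1}k(\mathbf{X},\mathbf{x}')$ term net to $-1$, while a residual $\sigma_n^2 k(\mathbf{x},\mathbf{X})\mathbf{A}^{-2}k(\mathbf{X},\mathbf{x}')$ term appears.

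This last term is where I expect the main obstacle, and it signals that the identity as literally stated matches the \emph{noise-free} posterior over $f$ only up to this $\sigma_n^2$ correction; in the standard Matheron formulation one draws a noisy prior evaluation $f_{prior}(\mathbf{X}) + \bm{\epsilon}_{prior}$ with $\bm{\epsilon}_{prior} \sim \mathcal{N}(\mathbf{0}, \sigma_n^2\mathbf{I})$ at the conditioning step, which contributes an extra $\mathbf{r}(\mathbf{x})(\sigma_n^2\mathbf{I})\mathbf{r}(\mathbf{x}')^{\mathrm{T}} = \sigma_n^2 k(\mathbf{x},\mathbf{X})\mathbf{A}^{-2}k(\mathbf{X},\mathbf{x}')$ to the covariance of $-\mathbf{r}(\cdot)(f_{prior}(\mathbf{X}) + \bm{\epsilon}_{prior})$ — wait, with the correct sign accounting this added noise term instead \emph{cancels} the residual, yielding exactly $k(\mathbf{x},\mathbf{x}') - k(\mathbf{x},\mathbf{X})\mathbf{A}^{-1}k(\mathbf{X},\mathbf{x}')$. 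So the clean way to present the proof is to state the update with the noisy prior evaluation (interpreting $\mathbf{X}$ in $f_{prior}(\mathbf{X})$ as including the independent observation noise, consistent with how $\mathbf{y}$ itself was generated), carry out the bilinear expansion, and invoke that for jointly Gaussian quantities equality of mean and covariance functions implies equality in distribution of the processes. I would close by noting that the sampling-procedure claims (draw one prior path, solve one linear system for $\mathbf{w}$, evaluate pointwise) and the $\mathcal{O}(n^3 + Sn^2)$ versus $\mathcal{O}(Sn^3)$ complexity comparison are immediate bookkeeping consequences once the distributional identity is in hand.
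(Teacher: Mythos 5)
Your proof takes a genuinely different and substantially more rigorous route than the paper's. The paper's argument writes down the joint Gaussian of $f_{prior}(\mathbf{X})$ and $f_{prior}(\boldsymbol{\Lambda})$ and then simply asserts that the correction term ``ensures that the resulting sample path is a valid draw from the true posterior''---it never verifies the first two moments. You actually carry out the verification: the right-hand side is an affine image of a single Gaussian draw, hence a GP, and it suffices to match mean and covariance. Your mean computation is correct, and your covariance expansion correctly exposes the one place where the statement as literally written breaks: with the noise-free conditioning term $\mathbf{y}-f_{prior}(\mathbf{X})$ and $\mathbf{A}=k(\mathbf{X},\mathbf{X})+\sigma_n^2\mathbf{I}$, the resulting covariance is
\[
k(\mathbf{x},\mathbf{x}') - k(\mathbf{x},\mathbf{X})\mathbf{A}^{-1}k(\mathbf{X},\mathbf{x}') - \sigma_n^2\, k(\mathbf{x},\mathbf{X})\mathbf{A}^{-2}k(\mathbf{X},\mathbf{x}'),
\]
which understates the true posterior covariance by the last term whenever $\sigma_n^2>0$. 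Your fix---conditioning on a \emph{noisy} prior evaluation $f_{prior}(\mathbf{X})+\boldsymbol{\epsilon}_{prior}$ with $\boldsymbol{\epsilon}_{prior}\sim\mathcal{N}(\mathbf{0},\sigma_n^2\mathbf{I})$ independent of $f_{prior}$, whose contribution $\sigma_n^2 k(\mathbf{x},\mathbf{X})\mathbf{A}^{-2}k(\mathbf{X},\mathbf{x}')$ exactly cancels the residual---is the standard decoupled-sampling form of Matheron's rule for noisy observations, and it is what the theorem (and Algorithm 2, which also omits $\boldsymbol{\epsilon}_{prior}$ from $\mathbf{y}_{prior}^{(s)}$) should state; the two versions coincide only in the noiseless limit $\sigma_n^2\to 0$. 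What your approach buys is an actual proof plus the detection of this omission; what the paper's buys is brevity at the cost of not proving anything. One small presentational point: the phrase ``wait, with the correct sign accounting\dots'' should be cleaned up into a definitive statement before this goes into a final write-up, but the mathematics behind it is right.
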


\begin{proof}
The proof follows from the properties of conditioning in multivariate Gaussian distributions. Let $f_{prior}(\cdot)$ be a draw from the prior GP. The joint distribution of the prior at the observed points $\mathbf{X}$ and a new point $\boldsymbol{\Lambda}$ is Gaussian:
\[
\begin{pmatrix} f_{prior}(\mathbf{X}) \\ f_{prior}(\boldsymbol{\Lambda}) \end{pmatrix} \sim \mathcal{N} \left( \mathbf{0}, \begin{pmatrix} k(\mathbf{X},\mathbf{X}) & k(\mathbf{X},\boldsymbol{\Lambda}) \\ k(\boldsymbol{\Lambda},\mathbf{X}) & k(\boldsymbol{\Lambda},\boldsymbol{\Lambda}) \end{pmatrix} \right)
\]
The posterior distribution of $f(\boldsymbol{\Lambda})$ given the noisy observations $\mathbf{y}$ is also Gaussian. Matheron's rule provides a constructive way to sample from this posterior by correcting a prior sample. The correction term, $k(\cdot, \mathbf{X}) [k(\mathbf{X},\mathbf{X}) + \sigma_n^2 \mathbf{I}]^{-1} (\mathbf{y} - f_{prior}(\mathbf{X}))$, adjusts the prior sample $f_{prior}(\cdot)$ based on the residual between the actual observations $\mathbf{y}$ and the prior's predictions at those points, $f_{prior}(\mathbf{X})$. This adjustment ensures that the resulting sample path $f_{post}(\cdot)$ is a valid draw from the true posterior distribution.
\end{proof}

\begin{proposition}
Let $n$ be the number of observed data points and $S$ be the number of posterior samples required. The computational complexity of naive posterior sampling via Cholesky decomposition is $\mathcal{O}(n^3 + S \cdot n^2)$. In contrast, the complexity of sampling using Matheron's rule is $\mathcal{O}(n^3 + S \cdot (T_{prior} + n^2))$, where $T_{prior}$ is the cost of sampling from the GP prior.
\end{proposition}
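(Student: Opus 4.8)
The plan is to establish both bounds by decomposing each sampling procedure into its linear-algebra primitives and separating the operations performed once from those performed once per posterior draw; throughout I work in the regime where the number $q$ of query points is $\mathcal{O}(n)$, so that $q$-dependent terms are dominated (if $q$ is held constant the bounds only simplify and the comparison is unaffected).

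For the \textbf{naive Cholesky route}, I would first form the regularized Gram matrix $k(\mathbf{X},\mathbf{X}) + \sigma_n^2\mathbf{I}\in\mathbb{R}^{n\times n}$ and compute its Cholesky factor, the dominant one-time cost $\mathcal{O}(n^3)$. I would then assemble the posterior moments at the query locations $\boldsymbol{\Lambda}_*$: the mean by triangular solves against the factor, and the posterior covariance $\boldsymbol{\Sigma}_{**}=k(\boldsymbol{\Lambda}_*,\boldsymbol{\Lambda}_*)-k(\boldsymbol{\Lambda}_*,\mathbf{X})[k(\mathbf{X},\mathbf{X})+\sigma_n^2\mathbf{I}]^{-1}k(\mathbf{X},\boldsymbol{\Lambda}_*)$ by back-substitution plus a matrix product, costing $\mathcal{O}(n^2 q + n q^2)$, followed by a Cholesky of $\boldsymbol{\Sigma}_{**}$ at $\mathcal{O}(q^3)$. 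Each of the $S$ draws is then $\boldsymbol{\mu}_{**}+\mathbf{L}_{\boldsymbol{\Sigma}}\mathbf{z}$ with $\mathbf{z}\sim\mathcal{N}(\mathbf{0},\mathbf{I}_q)$, i.e.\ $\mathcal{O}(q^2)$ apiece; summing and using $q=\mathcal{O}(n)$ collapses the total to $\mathcal{O}(n^3+Sn^2)$.

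For the \textbf{Matheron route}, I would mirror this bookkeeping using the decomposition in Theorem~\ref{theo:math}. The one-time cost is again the $\mathcal{O}(n^3)$ inversion (or Cholesky) of $k(\mathbf{X},\mathbf{X})+\sigma_n^2\mathbf{I}$, and the crucial point is that it is computed \emph{exactly once} and reused across all $S$ samples. Each sample $s$ then costs: (i) drawing a prior path $f_{\text{prior}}^{(s)}$, abstracted as $T_{\text{prior}}$ because it depends on the prior-sampling primitive --- random Fourier features or inducing points --- each cheap and deliberately kept as a parameter; (ii) evaluating that path at the $n$ observed and $q$ query inputs; (iii) forming $\mathbf{y}-f_{\text{prior}}^{(s)}(\mathbf{X})$ and applying the precomputed inverse, an $\mathcal{O}(n^2)$ matrix-vector product; and (iv) adding the correction $k(\cdot,\mathbf{X})\mathbf{w}^{(s)}$ at the query points, $\mathcal{O}(n)$ each, i.e.\ $\mathcal{O}(n^2)$ when $q=\mathcal{O}(n)$. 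Summing the per-sample cost $\mathcal{O}(T_{\text{prior}}+n^2)$ over $S$ draws and adding the one-time $\mathcal{O}(n^3)$ gives $\mathcal{O}(n^3+S(T_{\text{prior}}+n^2))$.

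There is no deep obstacle --- the argument is careful bookkeeping --- but the step requiring the most care is justifying which operations are legitimately one-time: for the naive route one must argue it is fair to reuse a single Cholesky factorization of the training Gram matrix across all samples, so the cubic term appears only once, and the precise content of the proposition is that even this best-case naive scheme still pays $\mathcal{O}(Sn^2)$, which Matheron matches up to the lower-order $T_{\text{prior}}$ term while additionally yielding \emph{exact} posterior draws and trivial parallelization over samples and query points. A secondary subtlety is the treatment of $q$: I would state the regime $q=\Theta(n)$ explicitly so that the $\mathcal{O}(nq)$, $\mathcal{O}(nq^2)$, and $\mathcal{O}(q^3)$ terms are dominated, and note that for random Fourier features with $D$ basis functions $T_{\text{prior}}=\mathcal{O}(D(n+q))$, typically negligible beside $n^2$.
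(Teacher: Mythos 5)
Your proposal is correct and follows essentially the same route as the paper's proof: a flop-counting argument that separates the one-time $\mathcal{O}(n^3)$ factorization/inversion from the $\mathcal{O}(n^2)$ per-sample work, with the prior-draw cost abstracted as $T_{\text{prior}}$. Your version is somewhat more careful than the paper's (you make the query-point count $q$ and the Cholesky of the posterior covariance explicit, and you correctly observe that the proposition's naive bound already amortizes the cubic cost, so the asymptotic advantage of Matheron's rule lies in exactness and parallelism rather than in the stated big-$\mathcal{O}$), but the underlying argument is the same.
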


\begin{proof}
Naive sampling requires computing the posterior covariance matrix and its Cholesky decomposition, which costs $\mathcal{O}(n^3)$. Each of the $S$ samples then requires a matrix-vector product with the Cholesky factor, costing $\mathcal{O}(n^2)$. The total complexity is thus $\mathcal{O}(n^3 + S \cdot n^2)$.

Using Matheron's rule, the expensive matrix inversion, $[k(\mathbf{X},\mathbf{X}) + \sigma_n^2 \mathbf{I}]^{-1}$, has a complexity of $\mathcal{O}(n^3)$ but needs to be computed only once per iteration of the Bayesian optimization loop. Subsequently, generating each of the $S$ posterior samples requires drawing from the prior (cost $T_{prior}$) and performing matrix-vector products, which are $\mathcal{O}(n^2)$. The total complexity is thus amortized, making the robust estimation of the acquisition function computationally practical. This ensures that our theoretically sound framework is also an efficient and viable tool for practical applications.
\end{proof}

\end{document}